\documentclass[lettersize,journal]{IEEEtran}
\usepackage{amsmath,amsfonts}
\usepackage[hidelinks]{hyperref}
\usepackage{algorithm}
\usepackage{algpseudocode}
\usepackage{algorithmicx}
\usepackage{amsthm}
\usepackage{array}
\usepackage{makecell}
\usepackage{mathtools}
\usepackage{booktabs} 
\usepackage{cite}
\usepackage[caption=false,font=normalsize,labelfont=sf,textfont=sf]{subfig}
\usepackage{textcomp}
\usepackage{stfloats}
\usepackage{url}
\usepackage{verbatim}
\usepackage{graphicx}
\usepackage{cite}
\usepackage[pdftex,dvipsnames,table]{xcolor} 

\newcommand{\E}{\mathbb{E}}

\newtheorem{definition}{Definition}
\newtheorem{assumption}{Assumption}
\newtheorem{lemma}{Lemma}
\newtheorem{theorem}{Theorem}

\begin{document}

\title{Regularity-Aware Stochastic MGDA with Adaptive Conflict-Avoidant Update Direction Control}

\author{Chentong Huang, Lisha Chen

\small Dept. of Electrical and Computer Engineering, University of Rochester, Rochester, NY 14627\\
\small E-mails: chuang80@ur.rochester.edu, lisha.chen@rochester.edu}

\markboth{Preprint, July~2026}%
{Huang and Chen: Regularity-Aware Stochastic MGDA}

\maketitle

\begin{abstract}
Multi-objective learning (MOL) aims to optimize multiple objectives simultaneously. The multi-gradient descent algorithm (MGDA) is a workhorse that iteratively updates along a common descent or conflict-avoidant (CA) direction across objectives. In stochastic settings, however, the vanilla stochastic MGDA method, SMG~\cite{liu2024stochastic}, lacks a fast convergence rate because mini-batch sampling introduces noise in the gradients. This causes bias in the update direction, which is controlled by the CA direction continuity. In this paper, we show that the CA direction is \(1/2\)-H\"older continuous with respect to the Jacobian matrix, and the exponent \(1/2\)  cannot be improved in the worst case. This leads to a suboptimal convergence rate for vanilla stochastic MGDA in prior works.
Nevertheless, under additional regularity conditions, we show this can be improved to Lipschitz continuity. Based on this insight, we propose a stochastic multi-objective regularity-aware (MoRe) method that exploits the Lipschitz continuity of the CA direction when the subproblem is
regular, and switches to a fixed scalarization weight otherwise. 
Intuitively, the proposed algorithm employs CA direction update when the gradient conflict is large, and linear scalarization update otherwise.
Theoretically, our method improves the convergence rate of
SMG~\cite{liu2024stochastic,chen2024three} in the nonconvex setting from
\(\widetilde{\mathcal O}(T^{-1/4})\) to
\(\widetilde{\mathcal O}(T^{-1/2})\),  where \(\widetilde{\mathcal O}(\cdot)\) hides logarithmic factors. 
Meanwhile, we also establish the
per-iterate conflict-avoidance guarantees. 
Empirically, experiments demonstrate its
effectiveness in multi-task performance and
verify convergence behavior consistent with the established theoretical rate. 
\end{abstract}

\begin{IEEEkeywords}
multi-objective optimization, stochastic optimization, conflict avoidance, multi-task learning.
\end{IEEEkeywords}

\section{INTRODUCTION}
Multi-objective learning (MOL) has emerged as an important paradigm in modern machine learning. Representative applications include multi-task learning~\cite{sener2018multi}, meta learning~\cite{ye2021multi,chen2023meta_fnt}, and learning under multiple constraints~\cite{zafar2017fairness}. 

In this work, we consider the empirical MOL problem defined on a training sample sequence \(S=(z_1,\dots,z_n)\in\mathcal Z^n\). {Let \([M]\coloneqq\{1,\ldots,M\}\), where \(M\ge2\).} For each objective \(m\in[M]\), let \(f_{z,m}(x)\) denote the \(m\)-th objective function evaluated at model \(x\in\mathbb{R}^p\) on a data point \(z\), and define its empirical counterpart by \(f_{S,m}(x)\coloneqq n^{-1}\sum_{i=1}^n f_{z_i,m}(x)\). The empirical MOL problem is formulated as the optimization of the vector-valued objective:
\begin{equation}
\min_{x\in\mathbb{R}^p} F_S(x)
\coloneqq
\bigl(f_{S,1}(x),\dots,f_{S,M}(x)\bigr)^\top.
\end{equation}

Unlike single-objective learning, MOL seeks a Pareto optimal or Pareto stationary solution rather than the minimizer of a scalar loss. One of the central challenges is that the gradients of different objectives may conflict: a descent direction for one objective may increase another. A widely used strategy by many methods~\cite{mukai1980algorithms,liu2021conflict,chen2024three} is to dynamically combine objective gradients to obtain a conflict-avoidant (CA) direction. Among the methods, a representative approach is MGDA~\cite{mukai1980algorithms,fliege2000steepest,desideri2012multiple}, which computes a minimum-norm convex combination of objective gradients and has motivated many variants for MOL~\cite{liu2021conflict,fernando2022mitigating,chen2024three}.
While the deterministic MGDA variants are relatively well studied, full-batch gradients are typically too expensive to compute at every iteration in practice, especially
for large machine learning models. 
This has motivated stochastic variants of MGDA, such as the stochastic multi-gradient (SMG) method~\cite{liu2024stochastic}, which replaces the full-batch gradient matrix with stochastic gradient estimates computed from mini-batches.

However, directly extending the convergence analysis of MGDA to the stochastic setting is nontrivial. Since the CA direction is a nonlinear function of the gradient matrix, replacing the full-batch gradient matrix with a mini-batch estimate can introduce direction bias: the mini-batch CA direction may not be an unbiased or stable approximation of the full-batch CA direction. 
This sensitivity can be characterized by $\alpha$-H\"older continuity of the direction mapping: for gradient matrices \(Q_1,Q_2\in\mathbb{R}^{p\times M}\),
let $d_{Q_1}$ and $d_{Q_2}$ denote
the corresponding CA directions. Then,
\begin{equation}
\|d_{Q_1}-d_{Q_2}\|
\le
\ell_{\mathrm H}\|Q_1-Q_2\|^\alpha,
\end{equation}
where \(\ell_{\mathrm H}>0\) is the H\"older constant,
\(\alpha\in(0,1]\), and \(\alpha=1\) corresponds to Lipschitz continuity; see also~\cite[Section 4.2]{chen2024three}. Despite its direct relevance to the stability and convergence of stochastic MGDA methods, exploration of this question remains limited in the existing literature. Specifically, \cite{svaiter2018holder} proved that the CA direction is \(1/2\)-H\"older continuous with respect to the model variable \(x\), and that the exponent \(1/2\) is sharp. In addition, \cite{chen2024three} showed a corresponding \(1/2\)-H\"older continuity result with respect to the stochastic gradient matrix. This motivates us to investigate when stronger regularity can hold, and to exploit such properties in the design of stochastic algorithms with improved convergence guarantees while preserving conflict-avoidance behavior.

\subsection{Related work}
\noindent\textbf{Deterministic gradient-based MOL.} A major line of work in MOL seeks a common update direction for all objectives using gradient information. A foundational method in this direction is MGDA~\cite{mukai1980algorithms,fliege2000steepest,desideri2012multiple}, which dynamically combines gradients to obtain a common descent direction. 
Beyond MGDA, several gradient-aggregation  methods have been proposed to mitigate conflicts among objectives. For example, PCGrad~\cite{yu2020gradient} projects conflicting task gradients, while CAGrad~\cite{liu2021conflict} seeks a conflict-averse aggregated direction. These methods highlight the importance of conflict-aware gradient aggregation in MOL. However, in large-scale learning and modern deep learning applications, full-batch gradients are typically too expensive to compute at every iteration. \\
\textbf{Stochastic MOL.} Practical MOL algorithms usually require updates based on (mini-batch) stochastic gradients, which has motivated stochastic extensions of MGDA. For instance, SMG~\cite{liu2024stochastic} extends MGDA to the stochastic setting and provides convergence guarantees. Subsequent works further develop provably convergent stochastic MOL algorithms using different sampling, variance-reduction, and preference-guided mechanisms~\cite{zhou2022_SMOO,fernando2022mitigating,chen2024three,xiao2023direction,chen2024ferero,joint_gradient_balancing,fernando2024variance}.
For convergence analysis, a key obstacle lies in controlling update-direction bias and conflict-avoidance error: since the CA direction depends nonlinearly on the gradient matrix, stochastic gradient noise can create CA-direction error. This in turn affects the convergence analysis of the stochastic variants of MGDA, and may lead to suboptimal convergence rates, as discussed in~\cite{chen2024three}, and summarized in Table~\ref{tab:comparison}. 
\\
\textbf{Continuity of the CA direction map.} 
The continuity of the CA direction mapping is important for stochastic MOL because it controls the CA direction error. Earlier work studied the continuity of the CA direction with respect to the model variable and established a \(1/2\)-H\"older continuity result~\cite{svaiter2018holder}. For stochastic MOL, Chen et al.~\cite{chen2024three} prove a \(1/2\)-H\"older continuity result with respect to the stochastic gradient matrix.
However, these results do not characterize when stronger Lipschitz continuity with respect to the gradient matrix is available, nor do they explore whether it can be exploited to obtain faster stochastic convergence and more stable CA directions. Our work addresses this gap by identifying a nondegenerate Lipschitz regime of the direction map and exploiting it through a regularity-aware stochastic algorithm.

\subsection{Our contributions}
\noindent Our contributions in this paper are summarized as follows.\\
\textbf{Continuity analysis of the CA direction.}
We provide a fine-grained continuity analysis of the CA direction mapping. 
Under a nondegeneracy condition in Definition~\ref{def:pl_main}, Lemma~\ref{lem:lip_dq} proves Lipschitz continuity of the CA direction mapping, while Lemma~\ref{lem:holder_sharp} shows that without the nondegeneracy condition, the \(1/2\)-H\"older continuity on bounded sets is sharp in the worst case.\\
\textbf{Improved convergence guarantees.} Motivated by the above continuity properties, we develop a Multi-objective Regularity-aware (MoRe) algorithm that leverages the regularity of the CA direction in different regimes.
With mini-batch sizes \(|Z_t|=\Theta(t+1)\) and step size \(\alpha=\Theta(T^{-1/2})\), Theorem~\ref{thm:conv_main} gives the nonconvex empirical stationarity rate \(\widetilde{\mathcal O}(T^{-1/2})\), improving the \(\widetilde{\mathcal O}(T^{-1/4})\) SMG rate in~\cite{chen2024three}. 
Theorem~\ref{thm:ca} further establishes a per-iterate CA direction distance bound. 
See Table~\ref{tab:comparison} for a comparison with existing methods.
\\
\textbf{Experiments on MTL applications. }
We compare the proposed method with existing MOL algorithms, and the empirical results demonstrate its practical effectiveness in both model performance and convergence behavior.

\begin{table}[t]
\caption{Comparison with prior stochastic MOL algorithms in batch size, convergence rate, and CA direction distance. Here, Conv. and CA Dist. denote the convergence rate and CA direction distance, respectively, and \(\widetilde{\mathcal O}(\cdot)\) hides logarithmic factors.
Note that the result of MoDo~\cite{{chen2024three}} is obtained by optimally tuning the hyperparameters in the theorems therein.}
\label{tab:comparison}
\vspace{-3mm}
\centering
\footnotesize
\setlength{\tabcolsep}{4pt}
\renewcommand{\arraystretch}{1.1}
\begin{tabular}{lccc}
\toprule
Algorithm & Batch size & Conv. & CA Dist. \\ 
\midrule
SMG~\cite{liu2024stochastic}
\cite[Thms~7,8]{chen2024three} & $\Theta(t+1)$  & \(\widetilde{\mathcal{O}}(T^{-\frac{1}{4}})\)
& per-iterate\\
CR-MOGM~\cite[Thm~3]{zhou2022_SMOO} 
& $\Theta(1)$  & $\mathcal{O}(T^{-\frac{1}{2}})$ 
& - \\
MoCo~\cite[Thm 2]{fernando2022mitigating}
&$\Theta(1)$ &$\mathcal{O}(T^{-\frac{1}{10}})$ &average \\
MoDo~\cite[Thms~3,5]{chen2024three}
&$\Theta(1)$ &$\widetilde{\mathcal{O}}(T^{-\frac{1}{2}})$ &average \\
MoCo+~\cite[Thms 1,2]{fernando2024variance}
&$\Theta(1)$ &$\mathcal{O}(T^{-\frac{2}{3}})$ &average \\
\hline
\bf Ours, Thm~\ref{thm:conv_main},~\ref{thm:ca} &  $\Theta(t+1)$  & \(\widetilde{\mathcal{O}}(T^{-\frac{1}{2}})\)
& per-iterate
\\
\bottomrule
\end{tabular}
\vspace{-5mm}
\end{table}

\section{PROBLEM SETUP}
In this section, we introduce some basic concepts in MOL problems.
We briefly introduce the corresponding population objective $F$ only to state the standard notions of Pareto stationarity and Pareto optimality. Let \(F(x)\coloneqq (f_1(x),\ldots,f_M(x))^\top\), where \(f_m(x)\coloneqq \E_z[f_{z,m}(x)]\). Our optimization and convergence analysis focus on the empirical objective \(F_S\) defined above. The same definitions apply to the empirical problem after replacing \(F\) by \(F_S\).

\begin{definition}[Pareto optimal and Pareto stationary points]
A point \(x^*\in\mathbb{R}^p\) is called {Pareto optimal} if there does not exist any \(x\in\mathbb{R}^p\) with \(x\neq x^*\) such that
$f_m(x)\le f_m(x^*)$ for all $m\in[M]$
and
$f_{m'}(x)<f_{m'}(x^*)$ for at least one $m'\in[M]$.
It is called {weakly Pareto optimal} if there does not exist any \(x\in\mathbb{R}^p\) such that
$f_m(x)<f_m(x^*)$ for all $m\in[M]$.
A point \(x\in\mathbb{R}^p\) is called {Pareto stationary} if there exists \(\lambda\in\Delta^M\) such that
$\nabla F(x)\lambda = 0,
$
where
$\Delta^M \coloneqq \{\lambda\in\mathbb{R}^M \mid \mathbf{1}^\top \lambda = 1,\ \lambda \ge 0\}, $
and
$\nabla F(x)\coloneqq \bigl(\nabla f_1(x),\nabla f_2(x),\dots,\nabla f_M(x)\bigr)\in\mathbb{R}^{p\times M}$. 
\end{definition}

By definition, a Pareto stationary point admits no common descent direction for all objectives. For smooth objectives, a necessary and sufficient condition for \(x\) to be Pareto stationary~\cite{tanabe2019proximal} is
$
\min_{\lambda\in\Delta^M}\|\nabla F(x)\lambda\| = 0$.
Therefore, the quantity \(\min_{\lambda\in\Delta^M}\|\nabla F(x)\lambda\|\) can be used as a measure of Pareto stationarity~\cite{desideri2012multiple, fernando2022mitigating, chen2024three}. 
Accordingly, since our analysis is conducted on the empirical objective
\(F_S\), we define the empirical Pareto-stationarity measure
{
\begin{equation}
R_{S}(x)
\coloneqq
\min_{\lambda\in\Delta^M}
\|\nabla F_S(x)\lambda\|^2 .
\label{eq:r_opt}
\end{equation}
}

\subsection{Multi-gradient descent algorithm (MGDA)}
To solve the multi-objective problem, MGDA~\cite{desideri2012multiple} constructs a common descent direction for all objectives. At each iteration, MGDA seeks a direction \(d(x)\in\mathbb{R}^p\)
that improves all objectives simultaneously. In particular, one solves
\begin{align}
&d(x)
=
-\nabla F_S(x)\lambda^*(x)\\
&\text{s.t.}\quad\lambda^*(x)
\in
\mathop{\arg\min}_{\lambda\in\Delta^M}
\left\|
\nabla F_S(x)\lambda
\right\|^2.
\label{eq:mgda_dual}
\end{align}
Here, \(\lambda^*(x)\) denotes the simplex-constrained weight vector used to combine the objective gradients. The $t$-th iterate is then updated by
\begin{equation}
x_{t+1}
=
x_t+\alpha_t d(x_t),
\label{eq:mgda_update}
\end{equation}
where \(\alpha_t>0\) is the step size. The resulting direction \(d(x_t)\) is a common descent direction whenever \(x_t\) is not Pareto stationary. Moreover, MGDA terminates when \(\nabla F_S(x_t)\lambda^*(x_t)=0\), in which case \(x_t\) is a Pareto stationary point.
\subsection{Stochastic multi-objective gradient (SMG) method}
A natural stochastic counterpart of MGDA is the SMG method~\cite{liu2024stochastic}, which replaces the full-batch gradients by stochastic gradient estimates computed from sampled data. {Specifically, let \(Z=(\zeta_1,\ldots,\zeta_{|Z|})\) denote a mini-batch sampled i.i.d. uniformly from \(S\), equivalently with replacement. Define}
\begin{equation}
{
\nabla F_{Z}(x)
\coloneqq
\frac{1}{|Z|}\sum_{r=1}^{|Z|}\nabla F_{\zeta_r}(x),
}
\end{equation}
{where \(F_z(x) = \big(f_{z,1}(x),\ldots,f_{z,M}(x)\big)\) collects the objective values evaluated at sample \(z\). For notational simplicity, let \(Q\in\mathbb{R}^{p\times M}\) denote the stochastic gradient matrix \(\nabla F_Z(x)\),}
\begin{align}
{
 Q\coloneqq\nabla F_Z(x)=[q_{Z,1}(x),\ldots,q_{Z,M}(x)]\in\mathbb{R}^{p\times M}, 
}
\end{align}
{where \(q_{Z,m}(x)\coloneqq \frac{1}{|Z|}\sum_{r=1}^{|Z|}\nabla f_{\zeta_r,m}(x)\in\mathbb{R}^p\).} Define 
\begin{equation}
    g_Q(\lambda)\coloneqq \|Q\lambda\|^2,
\qquad \lambda\in\Delta^M .\label{eq:g_Q}
\end{equation} 
Then SMG computes the stochastic descent direction as
\begin{align}
d_Q= - Q\lambda_Q^*,
\quad \text{s.t.}\quad
\lambda_Q^* \in {\arg\min}_{\lambda\in\Delta^M} g_Q(\lambda),\label{dual_problem}
\end{align}
where \(d_Q\in\mathbb{R}^p\) denotes the update direction associated with \(Q\), and \(\lambda_Q^*\in\Delta^M\) denotes an optimal simplex weight for \(Q\).

To measure how well the update direction tracks the corresponding full-batch CA direction, we further adopt the CA direction distance introduced in~\cite{chen2024three}. 
The CA direction distance at the $t$-th iteration is defined as 
\begin{equation}
\mathcal E_{\mathrm{ca}}(t)
\coloneqq
\left\|\mathbb E[d_t-d(x_t)]\right\|^2, 
~~
d(x_t)=-\nabla F_S(x_t)\lambda^*(x_t),
\end{equation}
where $\lambda^*(x_t)
\in
\mathop{\arg\min}_{\lambda\in\Delta^M}
\left\|
\nabla F_S(x_t)\lambda
\right\|^2$, and $d_t$ is the update direction at the $t$-th iteration. 
The expectation in \(\mathcal E_{\mathrm{ca}}(t)\) is unconditional and is taken over the algorithmic randomness. This quantity measures the discrepancy between the stochastic update direction \(d_t\) and the full-batch CA direction \(d(x_t)\). It is therefore used to characterize the conflict-avoidance ability of stochastic MOL algorithms.

\section{REGULARITY-AWARE STOCHASTIC MGDA}
In this section, we first study the continuity properties of the CA direction mapping, identifying the conditions under which it is Lipschitz continuous and the regimes in which only \(1/2\)-Hölder continuity is available. Motivated by these findings, we then develop a stochastic Multi-objective Regularity-aware (MoRe) algorithm and establish its convergence and conflict-avoidance guarantees.

\subsection{Continuity properties of the CA direction mapping}
Although the optimizer \(\lambda_Q^*\) of the simplex
subproblem~\eqref{dual_problem} may not be unique, the resulting direction \(d_Q=-Q\lambda_Q^*\) is
unique, since it is the minimum-norm point in the
convex hull of 
\(\{q_{Z,1}(x),\ldots,q_{Z,M}(x)\}\). We show that the mapping \(Q\mapsto d_Q\) is Lipschitz continuous under a suitable nondegeneracy condition, but in general only \(1/2\)-H\"older continuous on bounded sets.

\begin{definition}
[\(\mu\)-nondegeneracy]\label{def:pl_main}
Let \(P:=I-\frac1M\mathbf 1\mathbf 1^\top\) be the orthogonal projector onto
\(\mathbf 1^\perp:=\{v\in\mathbb R^M:\mathbf 1^\top v=0\}\). Let
\(U\in\mathbb R^{M\times(M-1)}\) be any matrix whose columns form an
orthonormal basis of \(\mathbf 1^\perp\), so that \(U^\top U=I\) and
\(UU^\top=P\). 
We say a matrix \(Q\in\mathbb R^{p\times M}\) is
\(\mu\)-nondegenerate if
\begin{equation}
\mu_{\min}\!\left(U^\top Q^\top Q U\right)\ge \mu .
\label{eq:nondegeneracy_tangent}
\end{equation}
Equivalently, \(\|Qv\|^2\ge \mu\|v\|^2\) for every
\(v\in\mathbf 1^\perp\).
\end{definition}
Definition~\ref{def:pl_main} imposes a uniform positive-curvature condition on the dual subproblem~\eqref{dual_problem} restricted to feasible simplex directions. In particular, $U^\top Q^\top Q U$ is the reduced curvature matrix on the tangent space of the simplex, and the lower bound $\mu_{\min}(U^\top Q^\top Q U)\ge \mu$ rules out degeneracy in this reduced space. Equivalently, the quadratic objective $g_Q(\lambda)=\|Q\lambda\|^2$ is strongly convex on the affine hull of the simplex, i.e., along directions in $\mathbf 1^\perp$. This reduced strong convexity yields uniqueness of the MGDA weight and allows perturbations of the optimality conditions to be controlled directly, which is the key ingredient used to establish Lipschitz continuity of the CA direction mapping.
\begin{lemma}[Lipschitz continuity of $d_Q$ w.r.t. $Q$]\label{lem:lip_dq}
Suppose \(Q_1,Q_2\in\mathbb{R}^{p\times M}\) satisfy \(\|Q_i\|_F\le B\) and are \(\mu\)-nondegenerate in the sense of Definition~\ref{def:pl_main}; that is, 
$
\mu_{\min}\!\left(U^\top Q_i^\top Q_i U\right)\ge \mu$, for $i=1,2
$. Then the mapping \(Q\mapsto d_Q\) is Lipschitz continuous.
In particular, there exists a constant
$
\ell_d = 1+\frac{2B^2}{\mu}
$
such that
\begin{equation}
\|d_{Q_1}-d_{Q_2}\|
\le
\ell_d\|Q_1-Q_2\|.
\end{equation}
Here \(\|\cdot\|\) denotes the spectral norm for matrices. Since \(\|A\|\le \|A\|_F\), the same bound also holds with \(\|Q_1-Q_2\|_F\) on the right-hand side.
\end{lemma}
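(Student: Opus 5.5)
Write $\lambda_i:=\lambda^*_{Q_i}$, which is unique by $\mu$-nondegeneracy, and $d_i:=d_{Q_i}=-Q_i\lambda_i$. I will split the error as
\[
d_{Q_1}-d_{Q_2}=-(Q_1-Q_2)\lambda_1 - Q_2(\lambda_1-\lambda_2).
\]
Since $\lambda_1\in\Delta^M$, we have $\|\lambda_1\|\le\|\lambda_1\|_1=1$, so the first term is at most $\|Q_1-Q_2\|$. This accounts for the leading ``$1$'' in $\ell_d$. Everything then reduces to showing
\[
\|Q_2(\lambda_1-\lambda_2)\|\le \frac{2B^2}{\mu}\|Q_1-Q_2\|.
\]
A convenient intermediate target is $\|\lambda_1-\lambda_2\|\le \frac{2B}{\mu}\|Q_1-Q_2\|$, combined with $\|Q_2\|\le\|Q_2\|_F\le B$.

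For the weight perturbation I will use the variational inequalities of the two simplex subproblems. Since $g_{Q}(\lambda)=\|Q\lambda\|^2$ is convex with gradient $2Q^\top Q\lambda$, optimality gives, for all $\lambda\in\Delta^M$,
\[
\langle Q_i^\top Q_i\lambda_i,\lambda-\lambda_i\rangle\ge0 .
\]
Test $i=1$ with $\lambda=\lambda_2$ and $i=2$ with $\lambda=\lambda_1$, and add the two inequalities. Writing $v:=\lambda_1-\lambda_2\in\mathbf 1^\perp$, this gives
\[
\langle Q_2^\top Q_2 v, v\rangle \le \langle (Q_2^\top Q_2 - Q_1^\top Q_1)\lambda_1, v\rangle .
\]
Nondegeneracy of $Q_2$ makes the left side at least $\mu\|v\|^2$, because $v\in\mathbf 1^\perp$. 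For the right side, factor
\[
Q_2^\top Q_2-Q_1^\top Q_1 = Q_2^\top(Q_2-Q_1)+(Q_2-Q_1)^\top Q_1,
\]
whose spectral norm is at most $2B\|Q_1-Q_2\|$. With $\|\lambda_1\|\le1$ this yields $\mu\|v\|^2\le 2B\|Q_1-Q_2\|\,\|v\|$, so $\|v\|\le \frac{2B}{\mu}\|Q_1-Q_2\|$. Then $\|Q_2v\|\le B\|v\|\le \frac{2B^2}{\mu}\|Q_1-Q_2\|$, and adding the first term gives exactly $\ell_d=1+2B^2/\mu$.

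The main obstacle is the weight-perturbation step. The subproblem is constrained, so the KKT multipliers and active sets can change between $Q_1$ and $Q_2$, and an implicit-function argument would fail. The summed variational inequality sidesteps this: it needs only that $\lambda_1-\lambda_2$ lies in $\mathbf 1^\perp$, where the reduced strong convexity of Definition~\ref{def:pl_main} applies. Note that the lower bound only needs the nondegeneracy of one matrix, here $Q_2$.

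A sharper route is to bound $\|Q_2v\|^2$ directly rather than passing through $\|v\|$. This might improve the constant, but it is not needed for the stated $\ell_d$. Finally, the Frobenius-norm version follows immediately from $\|A\|\le\|A\|_F$.
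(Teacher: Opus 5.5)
Your proof is correct and is essentially the paper's argument. You sum the two variational inequalities, use reduced strong convexity on $\mathbf 1^\perp$ together with the factorization of $Q_1^\top Q_1-Q_2^\top Q_2$ to get $\|\lambda^*_{Q_1}-\lambda^*_{Q_2}\|\le \frac{2B}{\mu}\|Q_1-Q_2\|$, and then split $d_{Q_1}-d_{Q_2}$ into a weight-perturbation term and a $\|Q_1-Q_2\|$ term; the only difference from the paper is cosmetic, in that you put the curvature lower bound on $Q_2$ and evaluate the perturbation at $\lambda^*_{Q_1}$, whereas the paper uses $Q_1^\top Q_1$ and $\lambda^*_{Q_2}$.
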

Lemma~\ref{lem:lip_dq} shows that, on any bounded set of matrices satisfying Definition~\ref{def:pl_main}, the CA direction depends Lipschitz continuously on the gradient matrix. Consequently,
mini-batch noise in \(Q\) translates into controlled errors in the update
direction \(d_Q\), which is essential for the convergence analysis of stochastic MGDA
methods. The proof of Lemma~\ref{lem:lip_dq} is deferred to Appendix~\ref{app:lip_dq}.
\begin{lemma}[Sharpness of the \(1/2\)-H\"older exponent] 
For any \(Q,Q'\in\mathbb R^{p\times M}\) satisfying \(\|Q\|_F\le B\) and \(\|Q'\|_F\le B\), the mapping \(Q\mapsto d_Q\) is \(1/2\)-H\"older continuous on this bounded set; moreover,
without additional assumptions, no H\"older continuity with exponent
\(\eta>\tfrac{1}{2}\) holds in general.
\label{lem:holder_sharp}
\end{lemma}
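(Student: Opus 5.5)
The statement has two parts: a positive H\"older bound, and a counterexample showing the exponent cannot be raised. For the positive part I would use the variational characterization of $d_Q$. Write $u_i\coloneqq -d_{Q_i}=Q_i\lambda_i$ with $\lambda_i\coloneqq\lambda^*_{Q_i}$. Then $u_i$ is the minimum-norm point of $\mathrm{conv}\{Q_i e_m\}$, so for every $\lambda\in\Delta^M$ the first-order condition reads $\langle u_i, Q_i\lambda - u_i\rangle\ge 0$.

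I would test this condition for $Q_1$ at $\lambda=\lambda_2$ and for $Q_2$ at $\lambda=\lambda_1$. The cross points are $Q_1\lambda_2 = u_2+(Q_1-Q_2)\lambda_2$ and $Q_2\lambda_1=u_1+(Q_2-Q_1)\lambda_1$. Adding the two inequalities gives
\begin{equation*}
\langle u_1,u_2-u_1\rangle+\langle u_2,u_1-u_2\rangle+\langle u_1,(Q_1-Q_2)\lambda_2\rangle+\langle u_2,(Q_2-Q_1)\lambda_1\rangle\ge 0 .
\end{equation*}
The first two terms combine to $-\|u_1-u_2\|^2$. Using $\|\lambda_i\|\le 1$ on the simplex and $\|u_i\|\le \max_m\|Q_ie_m\|\le B$, the last two terms are at most $2B\|Q_1-Q_2\|$. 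Hence $\|d_{Q_1}-d_{Q_2}\|\le \sqrt{2B}\,\|Q_1-Q_2\|^{1/2}$, which is $1/2$-H\"older with constant $\sqrt{2B}$. This step is routine and matches the style of the result of~\cite{chen2024three}.

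For sharpness I would use a degenerate two-objective example in $\mathbb R^2$ where Definition~\ref{def:pl_main} fails. Take $Q=[a,b]$ with $a=(-1,0)^\top$ and $b=(1,0)^\top$. The segment between them contains the origin, so $d_Q=0$. Now perturb $Q_\epsilon=[a+\epsilon e_2,\;b+\epsilon e_2]$, so that $\|Q_\epsilon-Q\|=O(\epsilon)$. The segment is now the horizontal line at height $\epsilon$, its minimum-norm point is $(0,\epsilon)$, and $\|d_{Q_\epsilon}-d_Q\|=\epsilon$. That change is only linear, so this particular perturbation does not separate the exponents and must be modified.

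The main obstacle is to find a perturbation under which the direction moves by order $\epsilon^{1/2}$. The mechanism I would exploit is a change of which face of the hull is active. Take $a=(0,1)^\top$ and $b=(s,1)^\top$ with $s>0$, so that $d_Q=-(0,1)$, attained at the vertex $a$. Tilt the second gradient to $b_\epsilon=(s,1-\epsilon)^\top$, a perturbation of size $\epsilon$. The minimum-norm point $p$ of the segment $[a,b_\epsilon]$ is the foot of the perpendicular from the origin. The segment has direction $(s,-\epsilon)$, so $p=a+t(s,-\epsilon)$ with $t=\epsilon/(s^2+\epsilon^2)$. This foot lies in the segment whenever $t\le 1$. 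Choosing $s=\sqrt{\epsilon}$ gives $t\approx 1$ and $\|p-a\|=t\sqrt{s^2+\epsilon^2}\approx\sqrt{\epsilon}$.

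Here $s$ depends on $\epsilon$, but that is permitted, because H\"older continuity on the bounded set must hold uniformly over all pairs with Frobenius norm at most $B$. For any $\eta>1/2$ the ratio $\|d_{Q_\epsilon}-d_Q\|/\|Q_\epsilon-Q\|^\eta\approx\epsilon^{1/2-\eta}\to\infty$, so no bound of exponent $\eta$ can hold. After rescaling so that both matrices satisfy $\|\cdot\|_F\le B$, and embedding in $\mathbb R^p$ with the remaining $M-2$ columns set equal to $a$, this gives the claimed worst-case sharpness. The one check still needed is that $t\le1$ for small $\epsilon$, so that $p$ really lies in the segment and is the minimizer.
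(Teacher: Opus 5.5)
Your proof is correct. The sharpness half is essentially the paper's construction, while the H\"older half takes a slightly different and tighter route.

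The paper uses only the optimality inequality for $Q'$. It expands $\|Q\lambda_Q^*-Q'\lambda_{Q'}^*\|^2=\|Q\lambda_Q^*\|^2-\|Q'\lambda_{Q'}^*\|^2+2\langle Q'\lambda_{Q'}^*,Q'\lambda_{Q'}^*-Q\lambda_Q^*\rangle$. It then has to bound the gap of optimal values $\min_{\lambda\in\Delta^M}\|Q\lambda\|^2-\min_{\lambda\in\Delta^M}\|Q'\lambda\|^2$ by a uniform comparison of the two quadratics, arriving at $\|d_Q-d_{Q'}\|\le 2\sqrt{B}\,\|Q-Q'\|^{1/2}$. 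You instead test each variational inequality at the other matrix's weight and add them. The quadratic terms then collapse to $-\|u_1-u_2\|^2$ at once, so no value-function comparison is needed and the constant improves to $\sqrt{2B}$.

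For sharpness, your example is the paper's mechanism up to a rotation. A segment of length $s\approx\sqrt{\epsilon}$ has a vertex as its minimum-norm point, and the far endpoint is moved by $\epsilon$ toward the origin, so the foot of the perpendicular slides across the segment. The paper takes $s=\sin t\cos t$ and $\epsilon=\sin^2 t$, which places the foot exactly at the far endpoint.

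The check you left open is immediate. With $s=\sqrt{\epsilon}$ your foot parameter is $\epsilon/(s^2+\epsilon^2)=1/(1+\epsilon)\in(0,1)$ for every $\epsilon>0$. This gives $\|d_{Q_\epsilon}-d_Q\|=\sqrt{\epsilon/(1+\epsilon)}$ against $\|Q_\epsilon-Q\|=\epsilon$.

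Padding with copies of $a$ extends the example to every $M\ge 2$, which the paper leaves implicit. Both constructions also need $p\ge 2$. For $p=1$, the minimum-norm point of the interval spanned by the entries of $Q$ is $0$ clipped to that interval, which is $1$-Lipschitz in $Q$.
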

Lemma~\ref{lem:holder_sharp} indicates that the continuity result in Lemma~\ref{lem:lip_dq} cannot be extended to the general case. Lemma~\ref{lem:holder_sharp} also agrees
with the \(1/2\)-H\"older continuity of the CA direction mapping established
in~\cite{chen2024three}, while further showing that the exponent
\(\tfrac{1}{2}\) is sharp in the worst case. This is analogous to the result in~\cite{svaiter2018holder}, which establishes \(1/2\)-H\"older continuity of the CA direction with respect to the model variable \(x\) and shows that the exponent is optimal. Therefore, stochastic
MGDA may behave differently in regular and degenerate regimes: in regular
regimes, mini-batch perturbations in \(Q\) induce Lipschitz-controlled changes
in \(d_Q\), whereas in degenerate regimes, only \(1/2\)-H\"older control is
available. This distinction motivates the adaptive strategy developed in Algorithm~\ref{alg:alg}. The proof of Lemma~\ref{lem:holder_sharp} is deferred to Appendix~\ref{app:holder_sharp}.
\begin{algorithm}[t]
\caption{MoRe: Multi-objective Regularity-aware method}\label{alg:alg}
\begin{algorithmic}[1]
\State \textbf{input} 
training data \(S=(z_1,\ldots,z_n)\), initial model \(x_0\), step sizes \(\{\alpha_t\}_{t=0}^{T-1}\), batch sizes \(\{|Z_t|\}_{t=0}^{T-1}\), thresholds \(\{\mu_t\}_{t=0}^{T-1}\), fallback weight \(\lambda_0\in\Delta^M\), and an orthonormal basis \(U\in\mathbb R^{M\times(M-1)}\) of \(\mathbf 1^\perp\).
\For { $t=0, \dots, T-1$}
\State {Sample \(Z_t\) i.i.d. uniformly from \(S\)}
\State Compute $Q_t = [q_{Z_t,1}(x_t), \dots, q_{Z_t,M}(x_t)]$
\If { $\mu_{\min}(U^\top Q_t^\top Q_tU) \ge \mu_t$}
\State Update $\lambda_t$ by \eqref{eq: lambda_update}
\Else
\State set $\lambda_{t}=\lambda_0$
\EndIf
\State Update $x_{t+1} $ by \eqref{eq: x_update}
\EndFor
\State \textbf{output} $x_T$.
\end{algorithmic}
\end{algorithm}
\begin{figure}
\centering
\includegraphics[width=0.98\linewidth]{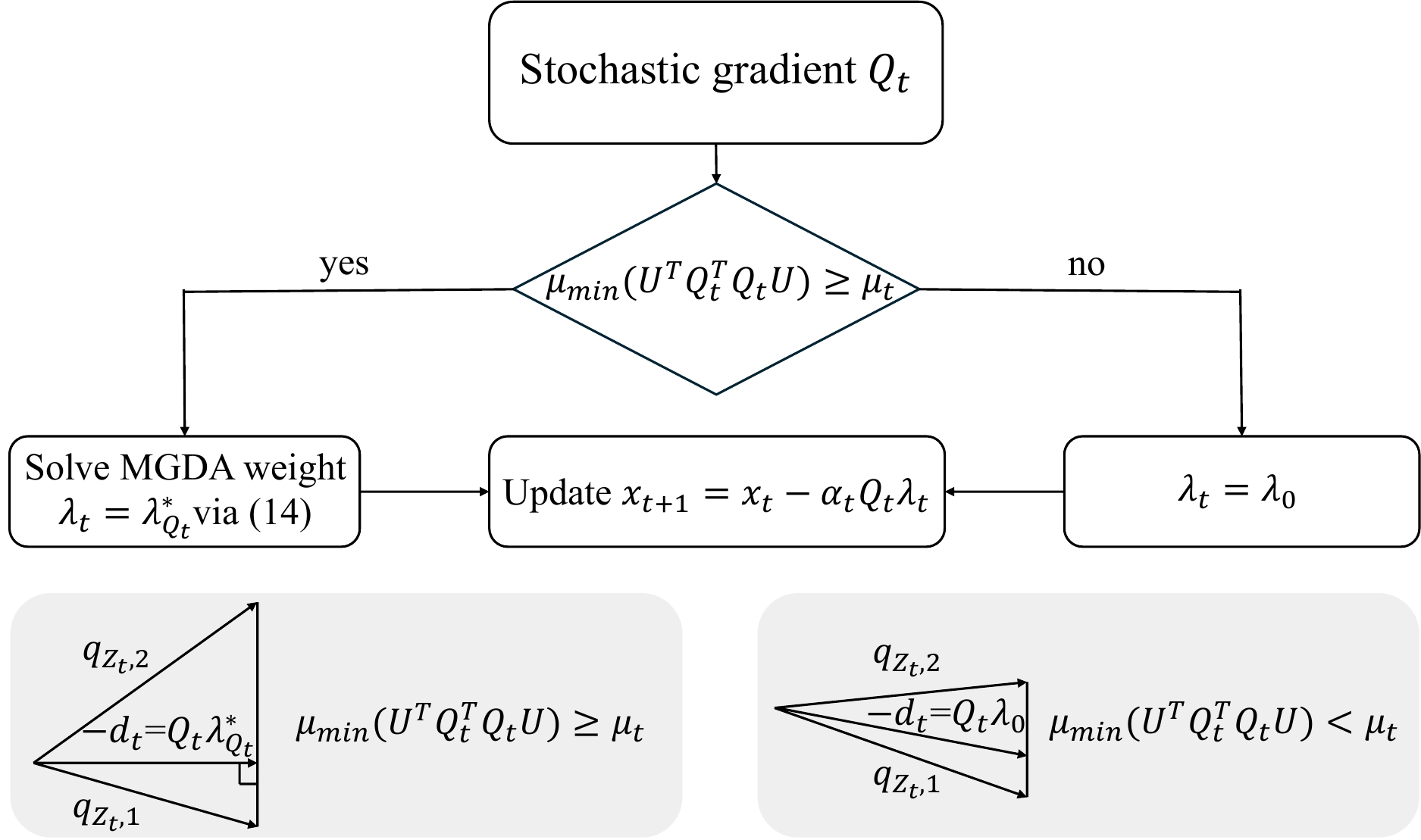}
\caption{Flowchart of the proposed MoRe algorithm. 
The illustrations show the switching rule in the two-objective case, where
\(\mu_{\min}(U^\top Q_t^\top Q_t U)=\frac{1}{2}\|q_{Z_t,1}-q_{Z_t,2}\|^2\) measures the distance or conflict
between the two stochastic gradients. 
When \(\mu_{\min}(U^\top Q_t^\top Q_t U)\ge \mu_t\), the algorithm uses the MGDA conflict-avoidant update direction. Otherwise, the stochastic gradient matrix is treated as nearly degenerate and the algorithm switches to the fixed-weight scalarization direction \(d_t=-Q_t\lambda_0\).
}
\vspace{-5mm}
\label{fig:flow}
\end{figure}
\subsection{Regularity-aware stochastic MGDA}
Building on the continuity results above, we propose a regularity-aware stochastic MGDA method. In particular, Lemma~\ref{lem:lip_dq} shows that the CA direction is Lipschitz continuous w.r.t. $Q$ under a nondegeneracy condition. 
Hence, mini-batch noise in $Q$ only induces a proportionally controlled perturbation in the resulting update direction. In contrast, Lemma~\ref{lem:holder_sharp} shows that such Lipschitz stability may fail in degenerate regimes
where small stochastic perturbations of $Q$ can lead to much larger changes in the CA direction. 

Motivated by this distinction, as shown in Figure~\ref{fig:flow}, Algorithm~\ref{alg:alg} adaptively switches between the stochastic MGDA weight and a fixed scalarization weight according to the regularity of the stochastic gradient matrix. When the stochastic MGDA subproblem is well-conditioned, the algorithm uses the CA direction to exploit CA gradient aggregation. When the subproblem is degenerate or nearly degenerate, the MGDA weight can be highly sensitive to mini-batch noise; the algorithm therefore uses a fixed scalarization weight $\lambda_0$, or static weighting, to avoid amplifying the noise in the optimizer of the simplex subproblem. The fallback weight \(\lambda_0\) can be chosen as any prescribed simplex vector reflecting a preferred scalarization; for example, in our experiments, we use the uniform choice \(\lambda_0=\frac{1}{M}\mathbf 1\).

Specifically, at iteration \(t\), a mini-batch \(Z_t\) is drawn independently with replacement from the empirical distribution on \(S\).
We then form the stochastic gradient matrix
$Q_t=\big[q_{Z_t,1}(x_t),\ldots,q_{Z_t,M}(x_t)\big]$,
which collects the mini-batch stochastic gradients \(q_{Z_t,m}(x_t)\) for \(m\in[M]\).
When the nondegeneracy condition $
\mu_{\min}\!\left(U^\top Q_t^\top Q_tU\right)\ge \mu_t$
holds, we compute the MGDA weight by solving
\begin{equation}
\lambda_t=\lambda_{Q_t}^*
\in
\mathop{\arg\min}_{\lambda\in\Delta^M}\|Q_t\lambda\|^2.\label{eq: lambda_update}
\end{equation}
Otherwise, the algorithm falls back to a fixed simplex weight \(\lambda_t=\lambda_0\in\Delta^M\). In both cases, the model is updated by
\begin{equation}
x_{t+1}=x_t-\alpha_t Q_t\lambda_t.\label{eq: x_update}
\end{equation}
That is, the proposed method uses the stochastic CA direction when the
nondegeneracy condition holds, and switches to a fixed scalarization rule
otherwise. The threshold sequence $\{\mu_t\}_{t \geq0}$ controls how strict the algorithm is in declaring the stochastic MGDA subproblem to be well-conditioned. 
We set the threshold schedule \(\{\mu_t\}_{t\ge0}\) so that \(0<\mu_t\le \bar\mu\) for some finite constant \(\bar\mu\).

This design exploits the continuity of the CA direction in
nondegenerate regimes and avoids relying on it in degenerate regimes, where the update direction can be sensitive to mini-batch noise.
\subsection{Theoretical analysis}
In this subsection, we provide theoretical guarantees for the proposed MoRe method. We first introduce the assumptions used throughout the analysis, and then establish convergence in terms of the Pareto stationarity measure, together with bounds on the CA direction distance.
\begin{assumption}[Finite lower bounds]\label{ass:finite_lower_bound}
For each $m\in[M]$, $
\inf_x f_{S,m}(x)>-\infty.$
\end{assumption}
Assumption~\ref{ass:finite_lower_bound} is mild and standard in convergence analysis. It ensures that every fixed scalarization of the empirical objectives is bounded below. 
\begin{assumption}[Unbiasedness and bounded variance]
\label{ass:unbiased_main}
Let \(\mathcal F_t\) be the sigma-field generated by all randomness up to the beginning of iteration \(t\), before sampling \(Z_t\). Conditional on \(\mathcal F_t\), \(Z_t=(\zeta_{t,r})_{r=1}^{|Z_t|}\) is sampled i.i.d. uniformly from \(S\). There exists a constant \(\sigma>0\), independent of \(t\), \(m\), and the algorithmic trajectory, such that for every \(t\) and every \(m\in[M]\), almost surely,
\begin{align}
&\mathbb E[q_{Z_t,m}(x_t)\mid\mathcal F_t]=\nabla f_{S,m}(x_t),\\
&\mathbb E\!\left[
\|q_{Z_t,m}(x_t)-\nabla f_{S,m}(x_t)\|^2
\mid\mathcal F_t  \right]
\le \frac{\sigma^2}{|Z_t|}.  
\end{align}
\end{assumption}
Assumption~\ref{ass:unbiased_main} requires the stochastic gradients to be unbiased and to have bounded variance. This is standard in stochastic optimization~\cite{liu2024stochastic, fernando2022mitigating}, and will be used to control the stochastic error introduced by mini-batch gradient estimation.

\begin{assumption}[Smoothness]\label{ass:smooth_main}
There exists a constant \(\ell_{f,1}>0\) such that, for every \(\lambda\in\Delta^M\), the scalarized empirical objective \(x\mapsto \lambda^\top F_S(x)\) is \(\ell_{f,1}\)-smooth on \(\mathbb R^p\).
\end{assumption}
\begin{assumption}[Bounded gradients]
\label{ass:gradbound_main}
There exist finite positive constants \(\ell_f\) and \(\ell_F=\sqrt{M}\ell_f\) such that, for every \(x\in\mathbb R^p\), \(z\in\mathcal Z\), and \(m\in[M]\), $\|\nabla f_{z,m}(x)\|\le \ell_f$. Since \(q_{Z,m}(x)=|Z|^{-1}\sum_{r=1}^{|Z|}\nabla f_{\zeta_r,m}(x)\), the triangle inequality gives \(\|q_{Z,m}(x)\|\le \ell_f\). Therefore, for every mini-batch \(Z\), every \(t\), and every \(\lambda\in\Delta^M\),
\begin{align}
\|Q_t\lambda\| &\le \ell_f, \quad
\|\nabla F_S(x_t)\lambda\| \le \ell_f, \notag\\
\|Q_t\|_F &\le \ell_F, \quad
\|\nabla F_S(x_t)\|_F \le \ell_F .
\label{eq:uniform_sample_gradient_consequences}
\end{align}
\end{assumption}
Assumptions~\ref{ass:smooth_main} and~\ref{ass:gradbound_main} are also standard in gradient-based MOL analysis~\cite{chen2024three}. Assumption~\ref{ass:smooth_main} is used to establish the descent property of the scalarized objective, while Assumption~\ref{ass:gradbound_main} controls the magnitudes of both the empirical and stochastic gradients along the optimization trajectory.
\begin{theorem}
\label{thm:conv_main}
Suppose Assumptions~\ref{ass:finite_lower_bound},~\ref{ass:unbiased_main},~\ref{ass:smooth_main}, and~\ref{ass:gradbound_main} hold. Let
\(\alpha_t=\alpha\) for all \(t\), and let the threshold sequence satisfy \(0<\mu_t\le\bar\mu\) for a finite constant \(\bar\mu\). Then the iterates generated by
Algorithm~\ref{alg:alg} satisfy
\begin{align}
\mathbb E\!\left[
{\min_{0\le t\le T-1}}
R_{S}(x_t)
\right]
\le
\mathcal O\!\Biggl(
\frac{1}{\alpha T}
+\alpha
+\frac1T\sum_{t=0}^{T-1}\frac{1}{\sqrt{|Z_t|}}&
\notag\\
\qquad
+\frac1T\sum_{t=0}^{T-1}
\frac{1}{\mu_t\sqrt{|Z_t|}}
+\frac1T\sum_{t=0}^{T-1}
\frac{1}{|Z_t|\mu_t^{3/2}}
\Biggr)&.
\end{align}
Suppose the mini-batch sizes and the constant step size satisfy
\begin{align}
{
|Z_t|=\Theta(t+1),\qquad
\alpha=\Theta(T^{-1/2}).
}
\label{eq:linear_batch_stepsize}
\end{align}
Under this schedule, the following two consequences hold.
If \(\mu_t=\Theta(1)\), then
\begin{equation}
\mathbb E\!\left[
{\min_{0\le t\le T-1}}
R_{S}(x_t)
\right]
=
\mathcal O(T^{-1/2}).
\label{eq:thm_final_rate_exact}
\end{equation}
If instead
$
\mu_t=\Theta\!\left(\frac{1}{\log(e+t)}\right),
$
then
\begin{equation}
\mathbb E\!\left[
{\min_{0\le t\le T-1}}
R_{S}(x_t)
\right]
=
\widetilde{\mathcal O}(T^{-1/2}).
\label{eq:thm_final_rate_tilde}
\end{equation}
\end{theorem}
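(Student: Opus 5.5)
The plan is a descent-lemma argument on a time-varying scalarization. The reference weight at step \(t\) is the full-batch MGDA weight \(\lambda^*(x_t)\). By Assumption~\ref{ass:smooth_main}, for any fixed \(\lambda\in\Delta^M\),
\[
\lambda^\top F_S(x_{t+1})\le \lambda^\top F_S(x_t)-\alpha\langle \nabla F_S(x_t)\lambda, Q_t\lambda_t\rangle+\tfrac{\ell_{f,1}\alpha^2}{2}\ell_f^2,
\]
using \(\|Q_t\lambda_t\|\le\ell_f\). The difficulty is that \(\lambda\) must be fixed for the telescoping, while the natural comparison weight \(\lambda^*(x_t)\) moves. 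I will follow the standard MoDo/SMG device from~\cite{chen2024three}: bound \(\lambda^\top F_S(x_{t+1})-\lambda^\top F_S(x_t)\) for \(\lambda=\lambda_t\) or \(\lambda^*(x_t)\), and pay for the drift of \(\lambda_t\) through bounded function-value differences. Since \(\|F_S(x_{t+1})-F_S(x_t)\|\le \sqrt M\ell_f\alpha\ell_f\), the drift term \((\lambda_{t}-\lambda_{t+1})^\top F_S(x_{t+1})\) can be avoided by telescoping the potential \(\lambda_0^\top F_S\)-type lower bounds.

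I expect the cleanest route to be an optimality-condition argument instead. Let \(\bar Q_t=\nabla F_S(x_t)\) and split the inner product. If \(\lambda_t=\lambda^*_{Q_t}\), the MGDA optimality condition gives \(\langle Q_t\lambda, Q_t\lambda_t\rangle\ge\|Q_t\lambda_t\|^2\) for every \(\lambda\in\Delta^M\). Hence \(\langle \bar Q_t\lambda,Q_t\lambda_t\rangle\ge \|d_{Q_t}\|^2-2\ell_f\|(Q_t-\bar Q_t)\|\). Next, \(\|d_{Q_t}\|^2\ge \|d_{\bar Q_t}\|^2-2\ell_f\|d_{Q_t}-d_{\bar Q_t}\|\), where \(\|d_{\bar Q_t}\|^2=R_S(x_t)\). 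If \(\lambda_t=\lambda_0\) (fallback), then \(\mu_{\min}(U^\top Q_t^\top Q_tU)<\mu_t\). In that case I will show \(R_S\) is nearly equal to \(\|\bar Q_t\lambda_0\|^2\) up to \(O(\mu_t)\)-type and noise terms. The reason is that a small tangent curvature means all columns are close, since \(\|Q_t v\|^2<\mu_t\|v\|^2\) in the worst tangent direction. This step needs care because only one direction is small when \(M>2\), so I may need to take \(\lambda\) in the descent inequality adapted to the fallback. Summing over \(t\) with \(\lambda\) fixed uses Assumption~\ref{ass:finite_lower_bound} and gives the \(1/(\alpha T)+\alpha\) terms. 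The expected noise \(\E\|Q_t-\bar Q_t\|\le \sqrt M\sigma/\sqrt{|Z_t|}\) by Assumption~\ref{ass:unbiased_main} and Jensen gives the \(\frac1T\sum|Z_t|^{-1/2}\) term.

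The main obstacle is bounding \(\E\|d_{Q_t}-d_{\bar Q_t}\|\) on the event that \(Q_t\) passes the test. Lemma~\ref{lem:lip_dq} requires both matrices to be nondegenerate, but only \(Q_t\) is certified. I would handle this with a Weyl-type perturbation: \(\mu_{\min}(U^\top\bar Q_t^\top\bar Q_tU)^{1/2}\ge \mu_t^{1/2}-\|Q_t-\bar Q_t\|\). On the event \(\|Q_t-\bar Q_t\|\le \sqrt{\mu_t}/2\), both matrices are \(\mu_t/4\)-nondegenerate, and Lemma~\ref{lem:lip_dq} gives \(\|d_{Q_t}-d_{\bar Q_t}\|\le(1+8\ell_F^2/\mu_t)\|Q_t-\bar Q_t\|\). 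Taking expectations produces the \(\frac1T\sum 1/(\mu_t\sqrt{|Z_t|})\) term. On the complementary event, I bound the error crudely by \(2\ell_f\) and use Markov/Chebyshev: \(\Pr(\|Q_t-\bar Q_t\|>\sqrt{\mu_t}/2)\le 4M\sigma^2/(\mu_t|Z_t|)\). A sharper split, using the \(1/2\)-Hölder bound of Lemma~\ref{lem:holder_sharp} on that event together with Cauchy–Schwarz, should produce the stated \(1/(|Z_t|\mu_t^{3/2})\) form; otherwise I accept an equivalent term.

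For the rates, I will substitute \(|Z_t|=\Theta(t+1)\) and \(\alpha=\Theta(T^{-1/2})\). Then \(\frac1T\sum (t+1)^{-1/2}=O(T^{-1/2})\) and \(\frac1T\sum (t+1)^{-1}=O(\log T/T)\). With \(\mu_t=\Theta(1)\) every term is \(O(T^{-1/2})\). With \(\mu_t=\Theta(1/\log(e+t))\), the middle term picks up a \(\log T\) factor and the last is \(O(\log^{5/2}T/T)\), which gives \(\widetilde{\mathcal O}(T^{-1/2})\). Finally, the minimum over \(t\) is bounded by the average.
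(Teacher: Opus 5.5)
There is a genuine gap, and it is in the fallback branch. You plan to show that $R_S(x_t)$ is close to $\|\nabla F_S(x_t)\lambda_0\|^2$ up to $O(\mu_t)$-type terms, arguing that small tangent curvature forces the columns of $Q_t$ to be close. This fails on two counts:
\begin{itemize}
\item For $M>2$ it is false, as you suspect. The condition $\mu_{\min}(U^\top Q_t^\top Q_tU)<\mu_t$ controls only one tangent direction. For example, $q_1=q_2=(1,0)$ and $q_3=(-5,0)$ give $\mu_{\min}=0$, yet $\|Q\lambda_0\|^2=1$ while $\min_{\Delta^M}\|Q\lambda\|^2=0$.
\item Even for $M=2$, where $\|q_{Z_t,1}-q_{Z_t,2}\|^2<2\mu_t$, the best you get is an additive per-step error of order $\sqrt{\mu_t}$. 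Its time average is $\Theta(1)$ when $\mu_t=\Theta(1)$. No such term appears in the stated bound, and the $\mathcal O(T^{-1/2})$ rate would be lost.
\end{itemize}
Your fallback alternative, using a different scalarization $\lambda$ in the descent inequality on fallback steps, breaks the telescoping, which needs one fixed $\lambda$ for all $t$. The same objection applies to the ``$\lambda=\lambda_t$ or $\lambda^*(x_t)$'' device in your first paragraph.

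The missing observation is that only a trivial one-sided inequality is needed: $\|\nabla F_S(x_t)\lambda_0\|^2\ge R_S(x_t)$, because $R_S$ is a minimum over $\Delta^M$. With the single potential $\lambda_0^\top F_S$, the fallback step gives $\langle\nabla F_S(x_t)\lambda_0,Q_t\lambda_0\rangle\ge\|\nabla F_S(x_t)\lambda_0\|^2-\ell_f\|Q_t-\nabla F_S(x_t)\|_F\ge R_S(x_t)-\ell_f\|Q_t-\nabla F_S(x_t)\|_F$, with no $\mu_t$ dependence at all. Your regular-branch lower bound on $\langle\nabla F_S(x_t)\lambda,Q_t\lambda_t\rangle$ holds for every $\lambda\in\Delta^M$, so you can take $\lambda=\lambda_0$ there too. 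This is exactly how the paper unifies the two cases.

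With that fix, the rest of your plan is essentially the paper's proof: Weyl plus Lemma~\ref{lem:lip_dq} on a good event, Markov on the complement, then summation. The differences are details:
\begin{itemize}
\item You invoke optimality of the stochastic weight $\lambda^*_{Q_t}$; the paper uses optimality of $\lambda^*(x_t)$. Both reduce to bounding $\|d_{Q_t}-d_{\nabla F_S(x_t)}\|$.
\item Your singular-value Weyl event $\|Q_t-\nabla F_S(x_t)\|\le\sqrt{\mu_t}/2$, with the crude bound $2\ell_f$ off the event, gives $\mathcal O(1/(|Z_t|\mu_t))$. Since $\mu_t\le\bar\mu$, this is already dominated by $1/(|Z_t|\mu_t^{3/2})$, so the H\"older refinement is unnecessary.
\end{itemize}
Your decomposition in fact does not need direction continuity at all. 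Replace $\|d_{Q_t}\|^2\ge R_S(x_t)-2\ell_f\|d_{Q_t}-d_{\nabla F_S(x_t)}\|$ by the optimal-value comparison $\min_{\lambda\in\Delta^M}\|Q_t\lambda\|^2\ge R_S(x_t)-2\ell_f\|Q_t-\nabla F_S(x_t)\|$, as in \eqref{eq:holder_value_diff}. This yields the bound with only the $\frac{1}{\alpha T}+\alpha+\frac1T\sum_t|Z_t|^{-1/2}$ terms, which implies the stated one.
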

Theorem~\ref{thm:conv_main} shows that the regularity threshold \(\mu_t\)
controls the convergence rate through the continuity of the CA direction
mapping. In particular, a uniformly positive threshold gives the exact
\(\mathcal O(T^{-1/2})\) rate, whereas a logarithmically decaying threshold
preserves a \(\widetilde{\mathcal O}(T^{-1/2})\) rate while allowing the CA direction distance to decay.
The detailed proof is provided in Appendix~\ref{app:conv_main}.

Beyond convergence in terms of Pareto stationarity, we also analyze how well
the stochastic update direction aligns with the corresponding full-batch CA direction. The following theorem bounds the CA direction distance, thereby
quantifying the conflict-avoidance behavior of the proposed method.
\begin{figure}[t]
\centering
\includegraphics[width=0.95\linewidth]{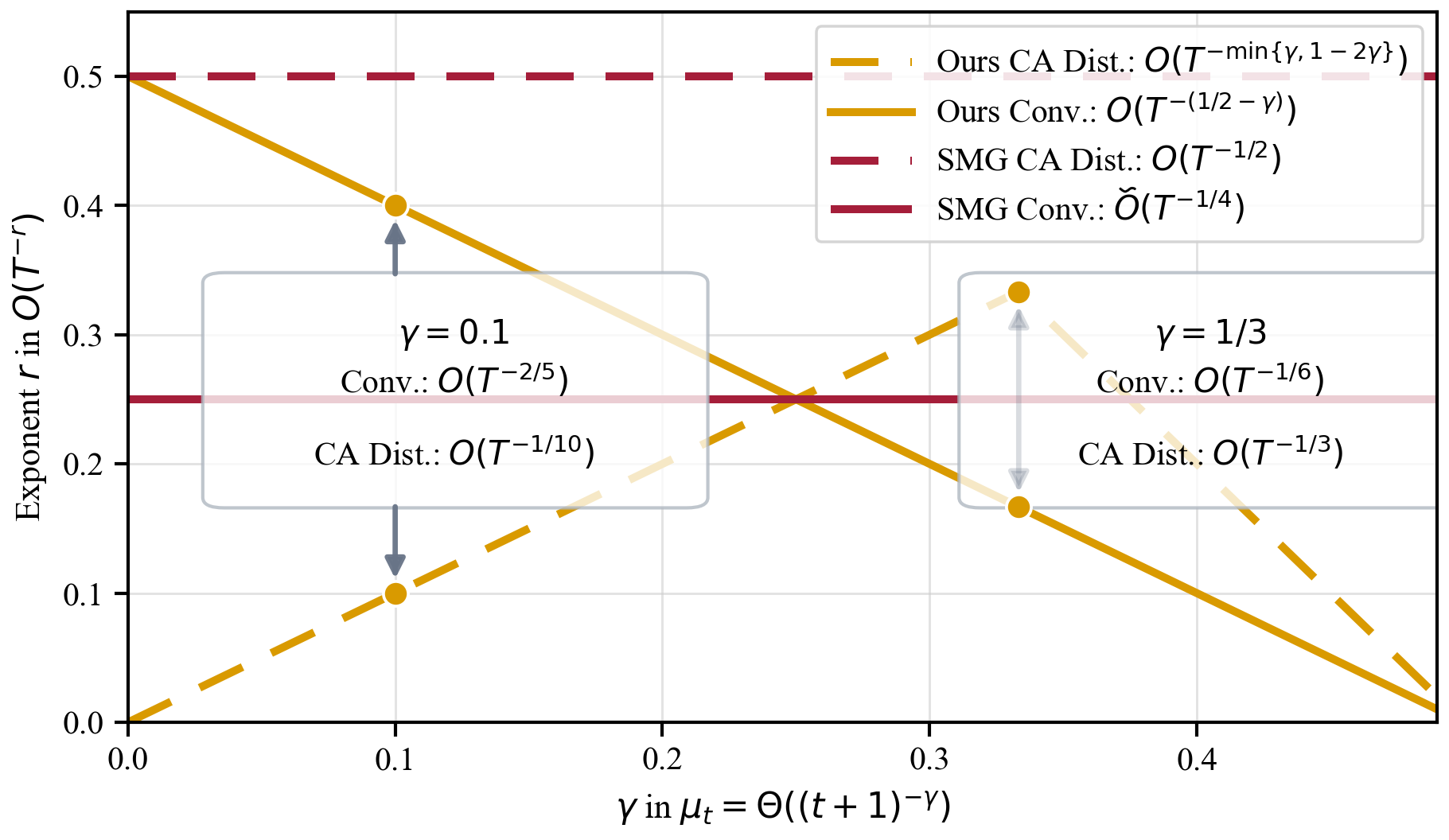}
\vspace{-3mm}
\caption{
Rate trade-off under the choice \(\mu_t=\Theta((t+1)^{-\gamma})\), \(0<\gamma<1/2\), with linearly growing mini-batch size \(|Z_t|=\Theta(t+1)\) and step size \(\alpha=\Theta(T^{-1/2})\). MoRe achieves CA distance exponent \(\min\{\gamma,1-2\gamma\}\) and convergence exponent \(1/2-\gamma\). The additional term \(1/(|Z_t|\mu_t^{3/2})\) in Theorem~\ref{thm:conv_main} contributes exponent \(1-3\gamma/2\), which is larger than \(1/2-\gamma\) for the plotted range and therefore is not rate-limiting. The SMG rates are shown as horizontal baselines since they are independent of \(\mu_t\).
}
\label{fig:rate_tradeoff}
\vspace{-5mm}
\end{figure}
\begin{theorem}\label{thm:ca}
Suppose Assumptions~\ref{ass:unbiased_main} and~\ref{ass:gradbound_main} hold, and let \(M=2\). Let the threshold sequence satisfy \(0<\mu_t\le\bar\mu\) for a finite constant \(\bar\mu\). Then the CA direction distance satisfies
\begin{equation}
\mathcal E_{\mathrm{ca}}(t)
=
\mathcal O\!\left(
\frac{1}{|Z_t|\,\mu_t^2}+
\frac{1}{|Z_t|}+\mu_t
\right).
\label{eq:thm_ca_main}
\end{equation}
In particular, if the mini-batch size grows linearly, i.e., $|Z_t|=\Theta(t+1)$, and $\mu_t=\Theta\!\left(\frac{1}{\log(e+t)}\right)$, then 
\begin{equation}
\mathcal E_{\mathrm{ca}}(t)=\mathcal O\!\left(\frac{1}{\log(e+t)}\right).
\end{equation}
\end{theorem}
Theorem~\ref{thm:ca} shows that, in the two-objective case, the CA direction
distance is controlled by both the batch size \(|Z_t|\) and
the regularity threshold \(\mu_t\). The
logarithmic choice \(\mu_t=\Theta(1/\log(e+t))\) matches the schedule used in
Theorem~\ref{thm:conv_main}; it gives a vanishing CA distance while preserving
the \(\widetilde{\mathcal O}(T^{-1/2})\) convergence guarantee.  Alternatively, if one
prioritizes the CA distance, the choice
\(\mu_t=\Theta((t+1)^{-1/3})\) yields the sharper rate
\(\mathcal E_{\mathrm{ca}}(t)=\mathcal O((t+1)^{-1/3})\). Thus, the
choice of \(\mu_t\) provides a trade-off between conflict-avoidance and
convergence speed. 
The proof of Theorem~\ref{thm:ca} is deferred to Appendix~\ref{app:ca}.

Figure~\ref{fig:rate_tradeoff} illustrates the rate trade-off induced by
\(\mu_t=\Theta((t+1)^{-\gamma})\). Increasing \(\gamma\) improves the CA direction
distance up to the optimal choice \(\mu_t=\Theta((t+1)^{-1/3})\), but decreases the convergence exponent from \(1/2\) to \(1/2-\gamma\), slowing the convergence rate. Thus, \(\mu_t\) acts as a tuning parameter between
conflict-avoidance and convergence speed. In contrast, the SMG rates
remain fixed because they are independent of \(\mu_t\).
\section{EXPERIMENT}
In this section, we empirically evaluate the proposed MoRe method on the Office-Home benchmark, and compare it with several representative MOL baselines. 
\subsection{Experiment setting}
We follow the same experimental setting as in~\cite{chen2024three} and use the LibMTL framework~\cite{lin2023libmtl}. For each algorithm, we report both individual task performance and a holistic metric \(\Delta_{A}^{\mathrm{id}}\%\), defined as
\begin{equation}
\Delta_{A}^{\mathrm{id}}\%
=
\frac{1}{M}\sum_{m=1}^{M}
(-1)^{\ell_m}\frac{S_{A,m}-S_{B,m}}{S_{B,m}}\times 100,
\end{equation}
where \(M\) is the number of tasks, \(S_{A,m}\) denotes the performance of method \(A\) on task \(m\), and \(S_{B,m}\) denotes the performance of the corresponding independent single-task learner. Here, \(\ell_m=1\) if a larger value of the metric indicates better performance, and \(\ell_m=0\) otherwise. This metric captures the average performance degradation of an MOL algorithm
relative to dedicated single-task learners. Therefore, a smaller
\(\Delta_A^{\mathrm{id}}\%\) indicates better overall multi-task performance. 

\subsection{Office-home benchmark}
Next, we evaluate the proposed method on the Office-Home benchmark~\cite{venkateswara2017deep}, which is a four-task image classification problem. The four tasks correspond to four visual domains, namely \emph{Art}, \emph{Clipart}, \emph{Product}, and \emph{Real-World}, with 65 object categories in each domain.
In our computation of
\(\Delta_A^{\mathrm{id}}\%\), the independent single-task learner accuracies
used as \(S_{B,m}\) are \(66.86\%\), \(82.02\%\), \(91.38\%\), and
\(81.24\%\) for \emph{Art}, \emph{Clipart}, \emph{Product}, and
\emph{Real-World}, respectively. These independent baselines are obtained
under our experimental setting and may differ from those used in prior works;
therefore, the resulting \(\Delta_A^{\mathrm{id}}\%\) values may not be
numerically identical to reported values in previous works.

As shown in Table~\ref{tab:officehome}, MoRe improves the performance on Art, Clipart, and Real-World, and achieves a better holistic degradation metric $\Delta_A^{\mathrm{id}}\%$. Although it slightly sacrifices Product accuracy compared with MoDo, the overall results suggest that MoRe provides a favorable trade-off across tasks and is effective for multi-task image classification.
Figure~\ref{fig:convergence_curve} shows the convergence behavior of the proposed method. We evaluate the empirical Pareto-stationarity measure \(R_{S}(x_t)\) using full-batch training gradients, matching the quantity analyzed in Theorem~\ref{thm:conv_main}. Under the parameter setting $\alpha=\Theta(T^{-1/2})$, $|Z_t|=\Theta(t+1)$, and $\mu_t=\Theta(1)$, the stationarity measure decreases steadily, with an empirical trend consistent with the $\mathcal{O}(T^{-1/2})$ rate in Theorem~\ref{thm:conv_main}.

\begin{table}[t]
\caption{Test accuracy on the Office-Home dataset.}
\label{tab:officehome}
\centering
\small
\setlength{\tabcolsep}{5pt}
\begin{tabular}{lccccc}
\toprule
Method & Art & Clipart & Product & Real-World & $\Delta_{A}^{\mathrm{id}}\% \downarrow$ \\
\midrule
CAGrad~\cite{liu2021conflict}    & 63.75 & 75.94 & 89.08 & 78.27 &4.56 \\
MoCo~\cite{fernando2022mitigating}      & 64.14 & {79.85} & 89.62 & 79.57 &2.68  \\
MoDo~\cite{chen2024three}  & {65.50} &79.44  & \textbf{89.72} & 79.65 &2.58  \\
Ours & \textbf{66.79} &\textbf{80.17} & 88.45 & \textbf{81.84} &\textbf{1.21}  \\
\bottomrule
\end{tabular}
\end{table}
\begin{figure}[t]
\centering
\includegraphics[width=0.95\linewidth]{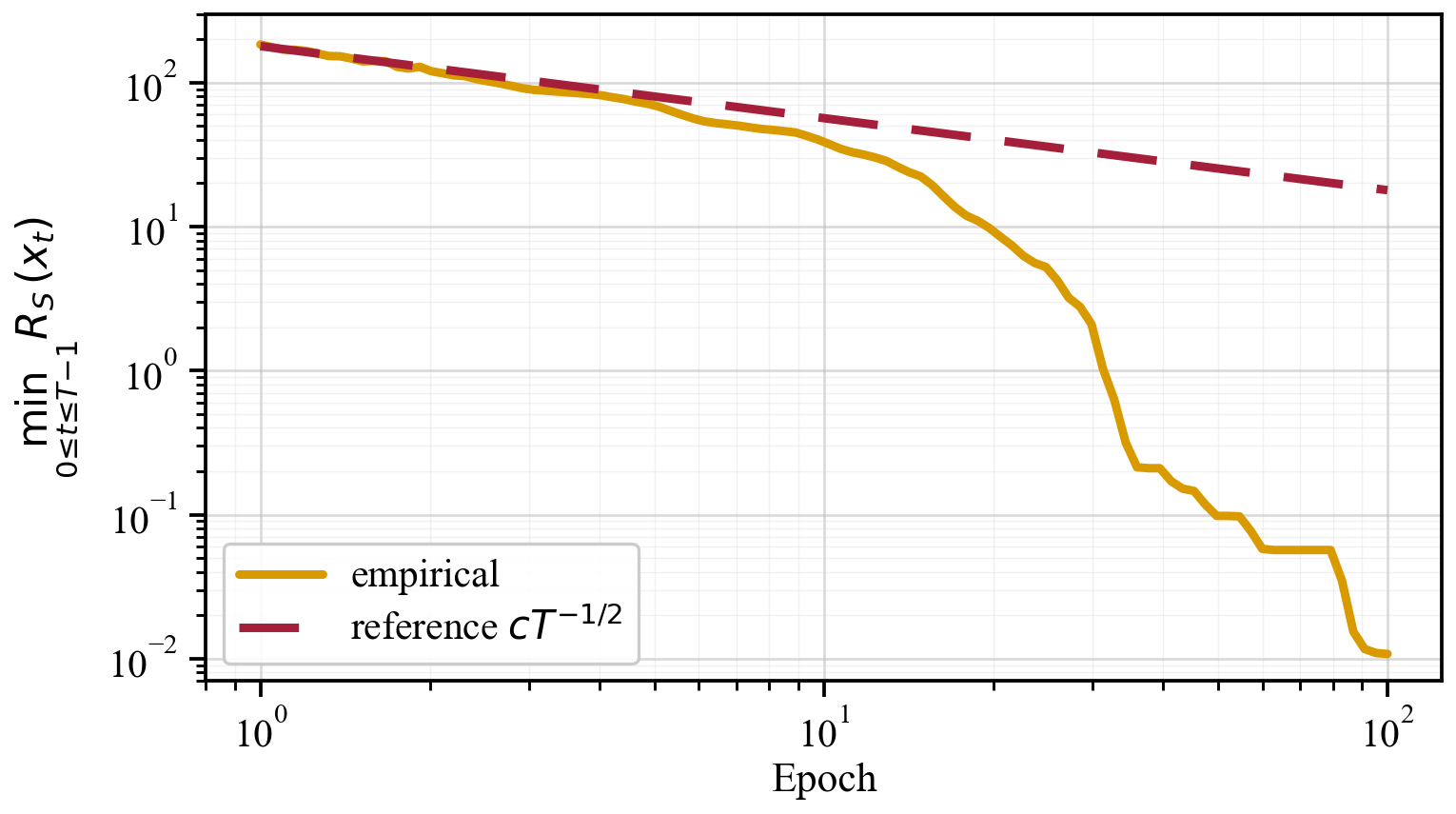}
\vspace{-3mm}
\caption{Convergence curve of the MoRe algorithm on Office-Home. The solid curve shows the empirical Pareto stationarity measure 
\(R_{S}(x_t)\), 
while the dashed line shows the reference rate \(c/\sqrt{T}\). The observed decay is consistent with the theoretical \(\mathcal{O}(T^{-1/2})\) convergence rate established in Theorem~\ref{thm:conv_main}.
}
\label{fig:convergence_curve}
\vspace{-5mm}
\end{figure}
\section{CONCLUSION}
This paper studied the continuity property of the MGDA update direction mapping in
stochastic MOL and used this property to guide algorithm
design. 
We showed that the CA direction is \(1/2\)-H\"older continuous on bounded sets, and this exponent is sharp in the worst case. 
Furthermore, a stronger Lipschitz continuity property
holds under a regular or nondegenerate condition. 
Based on this insight, we proposed MoRe, an adaptive regularity-aware stochastic
MGDA method that uses CA direction updates in nondegenerate regimes, where
mini-batch perturbations can be controlled, and switches to a fixed
scalarization weight in degenerate regimes. 
By updating in this manner, we showed an improved convergence rate to Pareto stationarity over vanilla
stochastic MGDA and a per-iterate CA-direction distance guarantee. Experiments on the Office-Home benchmark
further demonstrated the effectiveness of the proposed method in both
multi-task performance and convergence behavior.
One limitation is that the per-iterate CA-direction distance analysis is restricted to \(M=2\). Extending this analysis to the general multi-objective setting \(M>2\) is left for future work.

\IEEEpubidadjcol

\bibliographystyle{IEEEtran}
\bibliography{IEEEabrv,myabrv,reference,MOO_opt}

\onecolumn
\newpage

\appendix
\section{Notation}
A summary of notation used in this work is listed in Table~\ref{tab:notation} for ease of reference.
\begin{table}[ht]
\caption{{Notation and descriptions.}}
  \label{tab:notation}
  \small
  \centering
  \begin{tabular}{l|l }
  \toprule
  Notation & Descriptions   \\
  \midrule
   $x \in \mathbb{R}^{p}$ &Model parameter, or decision variable   \\
   \makecell[l]{\(z\in\mathcal Z\), \(S\), \(Z\)} 
&
\makecell[l]{{Data point, training sample sequence, and mini-batch, respectively, where} \\
{\(S=(z_1,\ldots,z_n)\in\mathcal Z^n\),} \\
{\(Z=(\zeta_1,\ldots,\zeta_{|Z|})\) is sampled  with replacement from \(S\).}} \\
  \makecell[l]{$f_{z,m}(x)$, $f_{S,m}(x)$ \\
  } &
    \makecell[l]{A scalar-valued objective function evaluated on data point $z$,\\ with $f_{z,m}: \mathbb{R}^p \to \mathbb{R}$, 
  or on dataset $S$, $f_{S,m}$, 
  with {$f_{S,m}(x) \coloneqq \frac{1}{n} \sum_{i=1}^n f_{z_i,m}(x)$}} \\
  \makecell[l]{$F_{z}(x)$, $F_{S}(x)$, $F_Z(x)$ \\
  } &
  \makecell[l]{A vector-valued objective function evaluated on data point $z$,\\ with $F_{z}(x): \mathbb{R}^p \to \mathbb{R}^M$,
  or on dataset $S$, with
  {$F_{S}(x) \coloneqq \frac{1}{n} \sum_{i=1}^n F_{z_i}(x)$} \\
{or on mini-batch $Z$, with}
  {$F_{Z}(x) \coloneqq \frac{1}{|Z|} \sum_{r=1}^{|Z|} F_{\zeta_r}(x)$}
  } \\
{\(\nabla F_S(x)\), \(\nabla F_Z(x)\)}
&
Gradient matrices of the empirical objective \(F_S(x)\) and the mini-batch 
objective \(F_Z(x)\), respectively\\
  $f_{m}(x)$ &A scalar-valued population objective function, $f_{m}(x) \coloneqq \E_z [f_{z,m}(x)] $ \\
  $\nabla f_{m}(x)$ &Gradient of $f_{m}(x)$, with 
  $\nabla f_{m}(x): \mathbb{R}^p \to \mathbb{R}^p$ \\
   $F(x)$ & A vector-valued population objective, $F(x) \coloneqq \E_z [F_{z}(x)] $ \\
  $\nabla F(x)$ &Gradient of $F(x)$, with 
  $\nabla F(x): \mathbb{R}^p \to \mathbb{R}^{p \times M}$ \\
\makecell[l]{\(Q\), \(q_{Z,m}(x)\)}
&
\makecell[l]{Stochastic gradient matrix and its \(m\)-th column, respectively, where \\
\(Q=\nabla F_Z(x)=[q_{Z,1}(x),\ldots,q_{Z,M}(x)]\in\mathbb{R}^{p\times M}\) \\
{\(q_{Z,m}(x)=\frac{1}{|Z|}\sum_{r=1}^{|Z|}\nabla f_{\zeta_r,m}(x)\in\mathbb{R}^p\)}} \\
 $\lambda \in \Delta^{M} $ &Weighting parameter in the $(M-1)$-simplex  \\

  $\lambda^*(x) \in \Delta^{M} $ &CA weight, optimal solution to~\eqref{eq:mgda_dual}   \\
  \(\lambda_Q^*\in \Delta^{M}\)
& Optimal simplex weight for \(Q\), solution to~\eqref{dual_problem} \\
\makecell[l]{\(d_Q\), \(d(x_t)\), \(d_t\)}

&

\makecell[l]{
\(d_Q=-Q\lambda_Q^*\) is the CA direction associated with \(Q\), \\
\(d(x_t)=-\nabla F_S(x_t)\lambda^*(x_t)\) is the full-batch CA direction at \(x_t\), \\
and \(d_t=-Q_t\lambda_t\) is the stochastic update direction used at iteration \(t\).} \\
\(\mu_t\)
&
Regularity threshold used by Algorithm~\ref{alg:alg} to decide whether the stochastic subproblem is nondegenerate.  \\
  $\alpha_t$ &Step size for updating the model parameter $x$\\
  \bottomrule
    \end{tabular}
  \vspace{0.2cm}
\end{table}
\section{Proof}
\subsection{Proof of Lemma~\ref{lem:lip_dq}}\label{app:lip_dq}
\begin{proof}
Throughout this proof, \(\|\cdot\|\) denotes the spectral
norm for matrices and the Euclidean norm for vectors.
By the boundedness condition in Lemma~\ref{lem:lip_dq} and the inequality \(\|Q_i\|\le \|Q_i\|_F\), we have \(\|Q_i\|\le B\), \(i=1,2\).
Let \(\lambda_{Q_i}^*\in\mathop{\arg\min}_{\lambda\in\Delta^M}\|Q_i\lambda\|^2\), \(A_i\coloneqq Q_i^\top Q_i\), \(i=1,2\). By the reduced nondegeneracy condition in Definition~\ref{def:pl_main}, \(\mu_{\min}(U^\top Q_i^\top Q_iU)\ge \mu\). Hence, for any \(\delta\in\mathbf 1^\perp\), writing \(\delta=Uz\), we have
\begin{equation}
\delta^\top A_i\delta
=
z^\top U^\top A_iUz
\ge
\mu\|z\|^2
=
\mu\|\delta\|^2.
\label{eq:app_reduced_sc}
\end{equation}
Since any two points in \(\Delta^M\) differ by a vector in \(\mathbf 1^\perp\), \(g_{Q_i}(\lambda)=\|Q_i\lambda\|^2\) is strongly convex on the affine hull of the simplex. Hence \(\lambda_{Q_i}^*\) is unique. The optimality conditions give \(0\in 2A_i\lambda_{Q_i}^*+N_{\Delta^M}(\lambda_{Q_i}^*)\), or equivalently,
\begin{equation}
\langle 2A_i\lambda_{Q_i}^*,\lambda-\lambda_{Q_i}^*\rangle\ge 0,
\qquad
\forall \lambda\in\Delta^M.
\label{eq:app_lip_opt_cond}
\end{equation}
Let \(\delta=\lambda^*_{Q_1}-\lambda^*_{Q_2}\). Taking \(\lambda=\lambda^*_{Q_2}\) for \(i=1\) and \(\lambda=\lambda^*_{Q_1}\) for \(i=2\) in \eqref{eq:app_lip_opt_cond} gives \(\langle A_1\lambda^*_{Q_1},\delta\rangle\le0\) and \(\langle A_2\lambda^*_{Q_2},\delta\rangle\ge0\), hence
\(\langle A_1\lambda^*_{Q_1}-A_2\lambda^*_{Q_2},\delta\rangle\le0\). Since
\(A_1\lambda^*_{Q_1}-A_2\lambda^*_{Q_2}=A_1\delta+(A_1-A_2)\lambda^*_{Q_2}\), we have
\(\delta^\top A_1\delta+\langle (A_1-A_2)\lambda^*_{Q_2},\delta\rangle\le0\). Rearranging this inequality and applying Cauchy--Schwarz gives
\begin{align}
\delta^\top A_1\delta
\le
-\langle (A_1-A_2)\lambda^*_{Q_2},\delta\rangle 
\le
\|A_1-A_2\|\,\|\lambda^*_{Q_2}\|\,\|\delta\|.
\label{eq:app_lip_delta_pre}
\end{align}
Because \(\lambda^*_{Q_2}\in\Delta^M\), \(\|\lambda^*_{Q_2}\|\le 1\). Also, \(\delta\in\mathbf 1^\perp\), so \eqref{eq:app_reduced_sc} gives \(\delta^\top A_1\delta\ge \mu\|\delta\|^2\). Therefore, combining it with \eqref{eq:app_lip_delta_pre} yields
\begin{equation}
\|\delta\|=\|\lambda^*_{Q_1}-\lambda^*_{Q_2}\|
\le
\frac{1}{\mu}\|A_1-A_2\|.
\label{eq:app_lip_lambda_A}
\end{equation}
Moreover, \(A_1-A_2=Q_1^\top(Q_1-Q_2)+(Q_1-Q_2)^\top Q_2\), and hence
\begin{equation}
\|A_1-A_2\|
\le
(\|Q_1\|+\|Q_2\|)\|Q_1-Q_2\|
\le
2B\|Q_1-Q_2\|.
\label{eq:app_lip_A_diff}
\end{equation}
Combining \eqref{eq:app_lip_lambda_A} and \eqref{eq:app_lip_A_diff}, we get
\begin{equation}
\|\lambda^*_{Q_1}-\lambda^*_{Q_2}\|
\le
\frac{2B}{\mu}\|Q_1-Q_2\|.
\label{eq:app_lip_lambda_diff}
\end{equation}
Finally, since \(d_Q=-Q\lambda^*_Q\), we have \(d_{Q_1}-d_{Q_2}=-Q_1(\lambda^*_{Q_1}-\lambda^*_{Q_2})-(Q_1-Q_2)\lambda^*_{Q_2}\). Using \(\|Q_1\|\le B\), \(\|\lambda^*_{Q_2}\|\le 1\), and \eqref{eq:app_lip_lambda_diff}, we obtain
\begin{align}
\|d_{Q_1}-d_{Q_2}\|
\le
\|Q_1\|\,\|\lambda^*_{Q_1}-\lambda^*_{Q_2}\|
+
\|Q_1-Q_2\|\,\|\lambda^*_{Q_2}\| 
\le
B\|\lambda^*_{Q_1}-\lambda^*_{Q_2}\|+\|Q_1-Q_2\| 
\le
\left(1+\frac{2B^2}{\mu}\right)\|Q_1-Q_2\|.
\end{align}
This proves the claim.

\end{proof}
\subsection{Proof of Lemma~\ref{lem:holder_sharp}}\label{app:holder_sharp}
\begin{proof}
Recall that
$
d_Q=-Q\lambda_Q^*$, where $
\lambda_Q^*\in\mathop{\arg\min}_{\lambda\in\Delta^M}\|Q\lambda\|^2 .
$
Although \(\lambda_Q^*\) may not be unique, the vector \(Q\lambda_Q^*\) is
unique, since it is the minimum-norm point in the compact convex set
\(\operatorname{conv}\{q_{Z,1},\cdots,q_{Z,M}\}\). Therefore, \(d_Q\) is well-defined.
Let \(Q,Q'\in\mathbb{R}^{p\times M}\), and let
\begin{equation}
\lambda_Q^*\in\mathop{\arg\min}_{\lambda\in\Delta^M}\|Q\lambda\|^2,
\qquad
\lambda_{Q'}^*\in\mathop{\arg\min}_{\lambda\in\Delta^M}\|Q'\lambda\|^2.
\end{equation}
Using the optimality of \(Q'\lambda_{Q'}^*\) as the projection of the origin
onto \(\operatorname{conv}\{q'_{Z,1},\cdots,q'_{Z,M}\}\), we have
\begin{equation}
\left\langle
Q'\lambda_{Q'}^*,
Q'\lambda_{Q'}^*-Q'\lambda_Q^*
\right\rangle
\le 0 .    \label{eq:proj_ineq}
\end{equation}
Expanding the quadratic term and using~\eqref{eq:proj_ineq}, we obtain
\begin{align}
\|d_Q-d_{Q'}\|^2=\|Q\lambda_Q^*-Q'\lambda_{Q'}^*\|^2
&=
\|Q\lambda_Q^*\|^2-\|Q'\lambda_{Q'}^*\|^2
+2\left\langle
Q'\lambda_{Q'}^*,
Q'\lambda_{Q'}^*-Q\lambda_Q^*
\right\rangle \notag\\
&=
\|Q\lambda_Q^*\|^2-\|Q'\lambda_{Q'}^*\|^2
+2\left\langle
Q'\lambda_{Q'}^*,
Q'\lambda_{Q'}^*-Q'\lambda_Q^*
\right\rangle \notag\\
&\quad
+2\left\langle
Q'\lambda_{Q'}^*,
Q'\lambda_Q^*-Q\lambda_Q^*
\right\rangle \notag\\
&\le
\min_{\lambda\in\Delta^M}\|Q\lambda\|^2
-\min_{\lambda\in\Delta^M}\|Q'\lambda\|^2
+2\|Q'\lambda_{Q'}^*\|\,\|(Q'-Q)\lambda_Q^*\| .\label{eq:holder_after_projection}
\end{align}
It remains to bound the difference of optimal values. We do so by comparing the two quadratic objectives uniformly over \(\Delta^M\)
\begin{align}
\min_{\lambda\in\Delta^M}\|Q\lambda\|^2
-\min_{\lambda\in\Delta^M}\|Q'\lambda\|^2
&\le
\max_{\lambda\in\Delta^M}
\left|
\|Q\lambda\|^2-\|Q'\lambda\|^2
\right| \notag\\
&\le
\sup_{\lambda\in\Delta^M}\|(Q-Q')\lambda\|
\left(
\sup_{\lambda\in\Delta^M}\|Q\lambda\|
+
\sup_{\lambda\in\Delta^M}\|Q'\lambda\|
\right).\label{eq:holder_value_diff}
\end{align}
Combining~\eqref{eq:holder_after_projection} and
\eqref{eq:holder_value_diff}, and using
\(\|Q'\lambda_{Q'}^*\|\le
\sup_{\lambda\in\Delta^M}\|Q'\lambda\|\), gives
\begin{equation}
\|d_Q-d_{Q'}\|^2
\le
4\max\left\{
\sup_{\lambda\in\Delta^M}\|Q\lambda\|,
\sup_{\lambda\in\Delta^M}\|Q'\lambda\|
\right\}
\sup_{\lambda\in\Delta^M}\|(Q-Q')\lambda\|.
\label{eq:holder_general_bound}
\end{equation}
This is the \(1/2\)-H\"older bound established in~\cite{chen2024three}. On the bounded set considered in Lemma~\ref{lem:holder_sharp}, \(Q\) and \(Q'\) satisfy \(\|Q\|\le B\) and \(\|Q'\|\le B\). Hence,
\begin{equation}
\sup_{\lambda\in\Delta^M}\|Q\lambda\|\le B,
\qquad
\sup_{\lambda\in\Delta^M}\|Q'\lambda\|\le B,
\qquad
\sup_{\lambda\in\Delta^M}\|(Q-Q')\lambda\|\le \|Q-Q'\|.    
\end{equation}
Therefore, \eqref{eq:holder_general_bound} implies
\begin{equation}
\|d_Q-d_{Q'}\|^2
\le
4B\|Q-Q'\|,   \quad  \|d_Q-d_{Q'}\|
\le
2\sqrt{B}\,\|Q-Q'\|^{1/2}.
\end{equation}
Hence \(Q\mapsto d_Q\) is generally \(1/2\)-H\"older continuous on any bounded
set of gradient matrices. It remains to show that the exponent \(1/2\) is sharp. 
We prove the claim by an explicit counterexample. For \(t\in(0,\pi/2)\), define
\begin{equation}
Q_y(t)\coloneqq [q_1^y(t),q_2^y(t)],
\qquad
Q_z(t)\coloneqq [q_1^z(t),q_2^z(t)],
\end{equation}
where
$
q_1^y(t)=(\cos^2 t,\sin t\cos t)$, $ q_2^y(t)=(1,0)$
and
$
q_1^z(t)=(1,\cos t\sin t)$, $ q_2^z(t)=(1,0).
$
We first identify the corresponding CA directions. Observe that
\begin{equation}
q_1^y(t)-q_2^y(t)=(-\sin^2 t,\sin t\cos t).
\end{equation}
A direct computation gives
\begin{align}
\bigl(q_1^y(t)-q_2^y(t)\bigr)^\top q_1^y(t)
&=
(-\sin^2 t,\sin t\cos t)^\top (\cos^2 t,\sin t\cos t) \notag\\
&=
-\sin^2 t\cos^2 t+\sin^2 t\cos^2 t \notag\\
&=0.
\end{align}
Hence \(q_1^y(t)-q_2^y(t)\) is orthogonal to \(q_1^y(t)\). Consequently, \(q_1^y(t)\) is the projection of the origin onto \(\operatorname{conv}\{q_1^y(t),q_2^y(t)\}\), and the CA direction associated with \(Q_y(t)\) has the following explicit form.
\begin{equation}
d_{Q_y(t)}=-q_1^y(t)=-(\cos^2 t,\sin t\cos t).
\end{equation}
Similarly, $q_1^z(t)-q_2^z(t)=(0,\sin t\cos t)$ is orthogonal to $q_2^z(t)=(1,0)$, and hence $q_2^z(t)$ is the projection of the origin onto $\operatorname{conv}\{q_1^z(t),q_2^z(t)\}$. Thus
\begin{equation}
d_{Q_z(t)}=-q_2^z(t)=-(1,0).
\end{equation}
Therefore, the two CA directions differ by
\begin{align}
\|d_{Q_y(t)}-d_{Q_z(t)}\|
&=\|q_1^y(t)-q_2^z(t)\|
=\|(\sin^2t,-\sin t\cos t)\| 
=\sin t .
\label{eq:app_holder_d_diff}
\end{align}
On the other hand, the two matrices differ only in the first coordinate of their first column, since
\begin{align}
Q_y(t)-Q_z(t)
=\begin{bmatrix}
-\sin^2t & 0\\
0 & 0
\end{bmatrix},
\qquad
\|Q_y(t)-Q_z(t)\|=\sin^2t .
\label{eq:app_holder_Q_diff}
\end{align}
Combining \eqref{eq:app_holder_d_diff} and \eqref{eq:app_holder_Q_diff}, for any $\eta>\tfrac12$ we have
\begin{align}
\frac{\|d_{Q_y(t)}-d_{Q_z(t)}\|}{\|Q_y(t)-Q_z(t)\|^\eta}
=\frac{\sin t}{(\sin^2t)^\eta}
=(\sin t)^{1-2\eta}\to +\infty
\quad\text{as }t\to0^+ .
\end{align}
Thus no local H\"older bound with exponent $\eta>\tfrac12$ can hold uniformly for the mapping $Q\mapsto d_Q$. This proves the sharpness of the exponent $\tfrac12$ in the worst case.

We remark that this is consistent with Lemma~\ref{lem:lip_dq}. The Frobenius norms of \(Q_y(t)\) and \(Q_z(t)\) are uniformly bounded, for instance by \(B=2\). However, for \(U=(1,-1)^\top/\sqrt2\),
\begin{equation}
\mu_{\min}(U^\top Q_y(t)^\top Q_y(t)U)
=\frac12\|q_1^y(t)-q_2^y(t)\|^2
=\frac12\sin^2t\to0
\quad\text{as }t\to0^+,    
\end{equation}
and the same degeneration holds for \(Q_z(t)\). Thus no uniform \(\mu>0\) in Definition~\ref{def:pl_main} covers the family, and the loss of higher-order H\"older regularity occurs exactly where the nondegeneracy condition fails.
\end{proof}
\subsection{Proof of Theorem~\ref{thm:conv_main}}\label{app:conv_main}
\begin{proof}
Let \(\lambda^*(x_t)\in\mathop{\arg\min}_{\lambda\in\Delta^M}
\|\nabla F_S(x_t)\lambda\|^2\), so that
$\|\nabla F_S(x_t)\lambda^*(x_t)\|^2
=
\min_{\lambda\in\Delta^M}\|\nabla F_S(x_t)\lambda\|^2
$. Define
$
H_t\coloneqq U^\top Q_t^\top Q_tU$,
$\bar H_t\coloneqq U^\top \nabla F_S(x_t)^\top \nabla F_S(x_t)U$.
We divide the analysis into two cases according to the update rule of
Algorithm~\ref{alg:alg}.\\
\textbf{Case 1:
\(\mu_{\min}(U^\top Q_t^\top Q_tU)\ge \mu_t\).}
Then the algorithm uses
\(\lambda_t\in\mathop{\arg\min}_{\lambda\in\Delta^M}\|Q_t\lambda\|^2\) and updates
\(x_{t+1}=x_t-\alpha_tQ_t\lambda_t\). 
The condition \(\mu_{\min}(H_t)\ge\mu_t\) gives nondegeneracy for the
stochastic gradient matrix \(Q_t\). To compare the stochastic CA direction
with the corresponding full-batch direction, we also need the full-batch
matrix \(\nabla F_S(x_t)\) to be nondegenerate.
We therefore define the good perturbation event
$
\mathcal G_t
\coloneqq
\left\{
\|H_t-\bar H_t\|\le \frac{\mu_t}{2}
\right\}
$, on which the reduced full-batch curvature is close to its stochastic
counterpart.
On \(\mathcal G_t\), since \(H_t\) and \(\bar H_t\) are symmetric matrices, Weyl's inequality~\cite{horn2012matrix} gives
\begin{equation}
\mu_{\min}(\bar H_t)
\ge
\mu_{\min}(H_t)-\|H_t-\bar H_t\|
\ge
\mu_t-\frac{\mu_t}{2}
=
\frac{\mu_t}{2}. \label{eq:weyl_good_event}  
\end{equation}
Therefore, on this event, both \(Q_t\) and \(\nabla F_S(x_t)\) satisfy the nondegeneracy
condition with curvature lower bound \(\frac{1}{2}\mu_t\) required by Lemma~\ref{lem:lip_dq}, which can be used to compare \(d_{Q_t}\) and \(d_{\nabla F_S(x_t)}\). It remains to control the complement \(\mathcal G_t^c
\coloneqq
\left\{
\|H_t-\bar H_t\|> \frac{\mu_t}{2}
\right\}\). Let
$E_t\coloneqq Q_t-\nabla F_S(x_t)$.
Then
\begin{equation}
H_t-\bar H_t
=
U^\top(Q_t^\top Q_t-\nabla F_S(x_t)^\top \nabla F_S(x_t))U 
=
U^\top\left(Q_t^\top E_t+E_t^\top \nabla F_S(x_t)\right)U.    
\end{equation}
Using \(\|U\|=1\) and the submultiplicativity of the spectral norm, we obtain
\begin{equation}
 \|H_t-\bar H_t\|
\le
\|Q_t^\top E_t+E_t^\top \nabla F_S(x_t)\| 
\le
\|Q_t^\top E_t\|+\|E_t^\top \nabla F_S(x_t)\| 
\le
\|Q_t\|\|E_t\|+\|E_t\|\|\nabla F_S(x_t)\|  \label{eq:H_diff}
\end{equation}
{By Assumption~\ref{ass:gradbound_main} and \eqref{eq:uniform_sample_gradient_consequences}, we have
\(\|Q_t\|\le \|Q_t\|_F\le \ell_F\) and
\(\|\nabla F_S(x_t)\|\le \|\nabla F_S(x_t)\|_F\le \ell_F\). Thus \(\ell_F\) directly replaces the bound in all one-step estimates.} 
Thus, combining it with~\eqref{eq:H_diff}, we obtain
\begin{equation}
\|H_t-\bar H_t\|
\le
2\ell_F\|Q_t-\nabla F_S(x_t)\|
\le
2\ell_F\|Q_t-\nabla F_S(x_t)\|_F.    \label{eq:H_quad} 
\end{equation}
Conditioning on \(\mathcal F_t\), Assumption~\ref{ass:unbiased_main} gives
$
\mathbb E\!\left[
\|Q_t-\nabla F_S(x_t)\|_F^2
\mid\mathcal F_t
\right]
\le
\frac{M\sigma^2}{|Z_t|}.
$
{Taking total expectation and using~\eqref{eq:H_quad},}
\begin{equation}
\mathbb E\!\left[
\|H_t-\bar H_t\|^2
\right]
\le
\frac{4\ell_F^2M\sigma^2}{|Z_t|}.    
\end{equation}
By Markov's inequality,
\begin{equation}
\mathbb P(\mathcal G_t^c)
=
\mathbb P\!\left(
\|H_t-\bar H_t\|>\frac{\mu_t}{2}
\right) 
\le
\frac{
\mathbb E[\|H_t-\bar H_t\|^2]
}{
\mu_t^2/4
}
\le
\frac{16\ell_F^2M\sigma^2}{|Z_t|\mu_t^2}.\label{eq:bad_event_probability}
\end{equation}
By Assumption~\ref{ass:smooth_main}, for any \(\lambda\in\Delta^M\), the
smoothness of the scalarized empirical objective gives
\begin{align}
\lambda^\top F_S(x_{t+1})-\lambda^\top F_S(x_t)
&\le
\langle \nabla F_S(x_t)\lambda,x_{t+1}-x_t\rangle
+\frac{\ell_{f,1}}{2}\|x_{t+1}-x_t\|^2 \notag\\
&=
-\alpha_t\langle \nabla F_S(x_t)\lambda,Q_t\lambda_t\rangle
+\frac{\ell_{f,1}}{2}\alpha_t^2\|Q_t\lambda_t\|^2.
\label{eq:app_descent_start}
\end{align}
We next bound the inner product term. Write
\begin{align}
-\langle \nabla F_S(x_t)\lambda,Q_t\lambda_t\rangle
&=
\langle \nabla F_S(x_t)\lambda,\nabla F_S(x_t)\lambda^*(x_t)-Q_t\lambda_t\rangle
-
\langle \nabla F_S(x_t)\lambda,\nabla F_S(x_t)\lambda^*(x_t)\rangle.
\label{eq:app_inner_split}
\end{align}
Since \(\lambda^*(x_t)\) minimizes \(\|\nabla F_S(x_t)\lambda\|^2\) over \(\Delta^M\), the
first-order optimality condition gives
\begin{align}
\langle \nabla F_S(x_t)\lambda^*(x_t),\nabla F_S(x_t)(\lambda-\lambda^*(x_t))\rangle
&\ge 0,\\
\langle \nabla F_S(x_t)\lambda,\nabla F_S(x_t)\lambda^*(x_t)\rangle
&\ge
\|\nabla F_S(x_t)\lambda^*(x_t)\|^2,
\qquad
\forall \lambda\in\Delta^M. \label{eq:app_mgda_char}
\end{align}
Moreover, by Assumption~\ref{ass:gradbound_main},
$
\|\nabla F_S(x_t)\lambda\|\le \ell_f$,
for all $\lambda\in\Delta^M
$.
Combining this with~\eqref{eq:app_inner_split} and~\eqref{eq:app_mgda_char}
gives
\begin{equation}
-\langle \nabla F_S(x_t)\lambda,Q_t\lambda_t\rangle
\le
-\|\nabla F_S(x_t)\lambda^*(x_t)\|^2
+
\ell_f\|\nabla F_S(x_t)\lambda^*(x_t)-Q_t\lambda_t\|.
\label{eq:app_inner_bound}
\end{equation}
Since
$
\|\nabla F_S(x_t)\lambda^*(x_t)-Q_t\lambda_t\|
=
\|d_{\nabla F_S(x_t)}-d_{Q_t}\|,
$
we now bound the direction difference separately on \(\mathcal G_t\) and
\(\mathcal G_t^c\).
On \(\mathcal G_t\), by~\eqref{eq:weyl_good_event}, Lemma~\ref{lem:lip_dq}
applies to \(Q_t\) and \(\nabla F_S(x_t)\). Hence,
\begin{equation}
\|d_{Q_t}-d_{\nabla F_S(x_t)}\|
\le
\ell_{d,t}\|Q_t-\nabla F_S(x_t)\|_F,
\qquad
\ell_{d,t}
=
1+\frac{4\ell_F^2}{\mu_t}.
\label{eq:good_event_lipschitz}
\end{equation}
Here Lemma~\ref{lem:lip_dq} is applied with nondegeneracy constant \(\mu_t/2\), and the Frobenius norm is used through \(\|A\|\le\|A\|_F\). On \(\mathcal G_t^c\), we use the \(1/2\)-H\"older continuity from
Lemma~\ref{lem:holder_sharp}. {Since Assumption~\ref{ass:gradbound_main} gives
\(\|Q_t\|_F\le \ell_F\) and
\(\|\nabla F_S(x_t)\|_F\le \ell_F\), both \(Q_t\) and \(\nabla F_S(x_t)\) are bounded by \(\ell_F\).} 
Thus,
\begin{equation}
\|d_{Q_t}-d_{\nabla F_S(x_t)}\|
\le
2\sqrt{\ell_F}\,\|Q_t-\nabla F_S(x_t)\|_F^{1/2}.
\label{eq:bad_event_holder}
\end{equation}
Combining~\eqref{eq:good_event_lipschitz} and~\eqref{eq:bad_event_holder}, on the regular branch we first obtain the pathwise case-by-case bound
\begin{equation}
\|d_{Q_t}-d_{\nabla F_S(x_t)}\|
\le
\ell_{d,t}\|Q_t-\nabla F_S(x_t)\|_F\,\mathbf 1_{\mathcal G_t}
+
2\sqrt{\ell_F}\,\|Q_t-\nabla F_S(x_t)\|_F^{1/2}\,\mathbf 1_{\mathcal G_t^c},    
\end{equation}
where the first term follows from \eqref{eq:good_event_lipschitz} on
\(\mathcal G_t\) and the second from \eqref{eq:bad_event_holder} on
\(\mathcal G_t^c\). Since both terms are nonnegative, dropping
\(\mathbf 1_{\mathcal G_t}\) only weakens the bound and yields
\begin{equation}
\|d_{Q_t}-d_{\nabla F_S(x_t)}\|
\le
\ell_{d,t}\|Q_t-\nabla F_S(x_t)\|_F
+
2\sqrt{\ell_F}\,
\|Q_t-\nabla F_S(x_t)\|_F^{1/2}\mathbf 1_{\mathcal G_t^c}.
\label{eq:app_direction_diff_good_bad}
\end{equation}
Substituting~\eqref{eq:app_direction_diff_good_bad} into
\eqref{eq:app_inner_bound}, and then into~\eqref{eq:app_descent_start}, gives
\begin{align}
\alpha_t \|\nabla F_S(x_t)\lambda^*(x_t)\|^2
&\le
\lambda^\top F_S(x_t)-\lambda^\top F_S(x_{t+1})
+
\alpha_t\ell_f\ell_{d,t}\|Q_t-\nabla F_S(x_t)\|_F \notag\\
&\quad
+
2\alpha_t\ell_f\sqrt{\ell_F}\,
\|Q_t-\nabla F_S(x_t)\|_F^{1/2}\mathbf 1_{\mathcal G_t^c}
+
\frac{\ell_{f,1}}{2}\alpha_t^2\|Q_t\lambda_t\|^2.
\label{eq:app_regular_one_step_before_lambda0}
\end{align}
{Using Assumption~\ref{ass:gradbound_main} and \(\lambda_t\in\Delta^M\), we have
\(\|Q_t\lambda_t\|\le \ell_f\le\ell_F\).} Thus,
taking \(\lambda=\lambda_0\), we obtain
\begin{align}
\alpha_t \|\nabla F_S(x_t)\lambda^*(x_t)\|^2
&\le
\lambda_0^\top F_S(x_t)-\lambda_0^\top F_S(x_{t+1})
+
\alpha_t\ell_f\ell_{d,t}\|Q_t-\nabla F_S(x_t)\|_F \notag\\
&\quad
+
2\alpha_t\ell_f\sqrt{\ell_F}\,
\|Q_t-\nabla F_S(x_t)\|_F^{1/2}\mathbf 1_{\mathcal G_t^c}
+
\frac{\ell_{f,1}}{2}\alpha_t^2\ell_F^2.
\label{eq:app_regular_one_step_corrected}
\end{align}
\textbf{Case 2:
\(\mu_{\min}(U^\top Q_t^\top Q_tU)< \mu_t\).}
In this case, the algorithm sets \(\lambda_t=\lambda_0\), where
\(\lambda_0\in\Delta^M\) is fixed. Plugging
\(x_{t+1}=x_t-\alpha_tQ_t\lambda_0\) into
Assumption~\ref{ass:smooth_main} with \(\lambda=\lambda_0\), we obtain
\begin{align}
\lambda_0^\top F_S(x_{t+1})-\lambda_0^\top F_S(x_t)
&\le
-\alpha_t\langle \nabla F_S(x_t)\lambda_0,Q_t\lambda_0\rangle
+\frac{\ell_{f,1}}{2}\alpha_t^2\|Q_t\lambda_0\|^2 \notag\\
&\le
-\alpha_t\langle \nabla F_S(x_t)\lambda_0,Q_t\lambda_0\rangle
+\frac{\ell_{f,1}}{2}\alpha_t^2 \ell_F^2.\label{eq:app_nonpl_lambda0_bound}
\end{align}
We use the pathwise
decomposition
\begin{equation}
\langle \nabla F_S(x_t)\lambda_0,Q_t\lambda_0\rangle
=
\|\nabla F_S(x_t)\lambda_0\|^2
+
\langle \nabla F_S(x_t)\lambda_0,(Q_t-\nabla F_S(x_t))\lambda_0\rangle. \label{eq:decom}
\end{equation}
Applying the Cauchy--Schwarz inequality to the second term of \eqref{eq:decom}, together with \(\|\nabla F_S(x_t)\lambda_0\|\le\ell_f\) and \(\|(Q_t-\nabla F_S(x_t))\lambda_0\|\le\|Q_t-\nabla F_S(x_t)\|_F\), we obtain
\begin{align}
\langle \nabla F_S(x_t)\lambda_0,Q_t\lambda_0\rangle
&\ge
\|\nabla F_S(x_t)\lambda_0\|^2
-
\|\nabla F_S(x_t)\lambda_0\|\,\|(Q_t-\nabla F_S(x_t))\lambda_0\| \notag\\
&\ge
\|\nabla F_S(x_t)\lambda_0\|^2
-
\ell_f\|Q_t-\nabla F_S(x_t)\|_F.
\end{align}
Since \(\lambda^*(x_t)\) minimizes \(\|\nabla F_S(x_t)\lambda\|^2\) over \(\Delta^M\),
$ \|\nabla F_S(x_t)\lambda^*(x_t)\|^2
\le \|\nabla F_S(x_t)\lambda_0\|^2.$
Therefore,
\begin{equation}
\langle \nabla F_S(x_t)\lambda_0,Q_t\lambda_0\rangle
\ge
\|\nabla F_S(x_t)\lambda^*(x_t)\|^2-\ell_f\|Q_t-\nabla F_S(x_t)\|_F.    
\end{equation}
Substituting this into~\eqref{eq:app_nonpl_lambda0_bound} and rearranging gives
\begin{equation}
\alpha_t \|\nabla F_S(x_t)\lambda^*(x_t)\|^2
\le
\lambda_0^\top F_S(x_t)-\lambda_0^\top F_S(x_{t+1})
+
\alpha_t\ell_f\|Q_t-\nabla F_S(x_t)\|_F
+
\frac{\ell_{f,1}}{2}\alpha_t^2\ell_F^2.
\label{eq:app_degenerate_one_step_corrected}
\end{equation}
\textbf{Unified bound for both cases.}
Since \(\ell_{d,t}\ge 1\), both
\eqref{eq:app_regular_one_step_corrected} and
\eqref{eq:app_degenerate_one_step_corrected} imply the following pathwise bound:
\begin{align}
\alpha_t\|\nabla F_S(x_t)\lambda^*(x_t)\|^2
&\le
\lambda_0^\top F_S(x_t)-\lambda_0^\top F_S(x_{t+1})
+
\alpha_t\ell_f\ell_{d,t}\|Q_t-\nabla F_S(x_t)\|_F \notag\\
&\quad
+
2\alpha_t\ell_f\sqrt{\ell_F}\,
\|Q_t-\nabla F_S(x_t)\|_F^{1/2}\mathbf 1_{\mathcal G_t^c}
+
\frac{\ell_{f,1}}{2}\alpha_t^2\ell_F^2.
\label{eq:app_unified_one_step}
\end{align}
Taking expectation in~\eqref{eq:app_unified_one_step}, we get
\begin{align}
\alpha_t\mathbb E[\|\nabla F_S(x_t)\lambda^*(x_t)\|^2]
&\le
\mathbb E\!\left[
\lambda_0^\top F_S(x_t)-\lambda_0^\top F_S(x_{t+1})
\right]
+
\alpha_t\ell_f\ell_{d,t}\mathbb E[\|Q_t-\nabla F_S(x_t)\|_F] \notag\\
&\quad
+
2\alpha_t\ell_f\sqrt{\ell_F}\,
\mathbb E\!\left[
\|Q_t-\nabla F_S(x_t)\|_F^{1/2}\mathbf 1_{\mathcal G_t^c}
\right]
+
\frac{\ell_{f,1}}{2}\alpha_t^2\ell_F^2.
\label{eq:app_expected_before_bad_holder}
\end{align}
Conditioning on \(\mathcal F_t\), Assumption~\ref{ass:unbiased_main} gives
\begin{equation}
{
\mathbb E\!\left[
\|Q_t-\nabla F_S(x_t)\|_F^2
\mid \mathcal F_t
\right]
=
\sum_{m=1}^M
\mathbb E\!\left[
\|q_{Z_t,m}(x_t)-\nabla f_{S,m}(x_t)\|^2
\mid \mathcal F_t
\right]  
\le
\frac{M\sigma^2}{|Z_t|}.
}
\label{eq:conditional_second_moment_noise}
\end{equation}
{Therefore, by Jensen's inequality,}
\begin{equation}
{
\mathbb E\!\left[
\|Q_t-\nabla F_S(x_t)\|_F
\mid \mathcal F_t
\right]
\le
\frac{\sqrt M\sigma}{\sqrt{|Z_t|}} .
}
\label{eq:first_moment_noise_conditional}
\end{equation}
{Taking total expectations and using the tower property yield}
\begin{align}
{
\mathbb E\!\left[
\|Q_t-\nabla F_S(x_t)\|_F^2
\right]
\le
\frac{M\sigma^2}{|Z_t|},
}
\label{eq:second_moment_noise}\\
{
\mathbb E\!\left[
\|Q_t-\nabla F_S(x_t)\|_F
\right]
\le
\frac{\sqrt M\sigma}{\sqrt{|Z_t|}} .
}
\label{eq:first_moment_noise}
\end{align}
It remains to bound the bad-event term. By H\"older's inequality with
conjugate exponents \(4\) and \(4/3\), we have
\begin{align}
\mathbb E\!\left[
\|Q_t-\nabla F_S(x_t)\|_F^{1/2}\mathbf 1_{\mathcal G_t^c}
\right]
&\le
\left(
\mathbb E[\|Q_t-\nabla F_S(x_t)\|_F^2]
\right)^{1/4}
\left(
\mathbb P(\mathcal G_t^c)
\right)^{3/4}.
\label{eq:holder_bad_event_term}
\end{align}
Using~\eqref{eq:second_moment_noise} and~\eqref{eq:bad_event_probability},
we obtain
\begin{equation}
\mathbb E\!\left[
\|Q_t-\nabla F_S(x_t)\|_F^{1/2}\mathbf 1_{\mathcal G_t^c}
\right]
\le
\left(
\frac{M\sigma^2}{|Z_t|}
\right)^{1/4}
\left(
\frac{16\ell_F^2M\sigma^2}{|Z_t|\mu_t^2}
\right)^{3/4} 
=
\mathcal O\!\left(
\frac{1}{|Z_t|\mu_t^{3/2}}
\right).
\label{eq:bad_event_holder_rate}
\end{equation}
Substituting~\eqref{eq:first_moment_noise} and
\eqref{eq:bad_event_holder_rate} into
\eqref{eq:app_expected_before_bad_holder}, and using that
\(\ell_{d,t}=1+4\ell_F^2/\mu_t\) with \(0<\mu_t\le\bar\mu\), we obtain
\begin{align}
\alpha_t\mathbb E[\|\nabla F_S(x_t)\lambda^*(x_t)\|^2]
\le
\mathbb E\!\left[
\lambda_0^\top F_S(x_t)-\lambda_0^\top F_S(x_{t+1})
\right] 
\quad
+
\alpha_t
\mathcal O\!\left(
\frac{1}{\sqrt{|Z_t|}}
+
\frac{1}{\mu_t\sqrt{|Z_t|}}
+
\frac{1}{|Z_t|\mu_t^{3/2}}
\right)
+
\mathcal O(\alpha_t^2).
\label{eq:app_expected_one_step_corrected}
\end{align}
Summing~\eqref{eq:app_expected_one_step_corrected} over
\(t=0,\ldots,T-1\), dividing both sides by
\(\alpha T\), and setting \(\alpha_t=\alpha\) for all \(t\), we obtain
\begin{align}
\frac1T\sum_{t=0}^{T-1}\mathbb E[\|\nabla F_S(x_t)\lambda^*(x_t)\|^2]
&\le
\frac{
\mathbb E\!\left[
\lambda_0^\top F_S(x_0)-\lambda_0^\top F_S(x_T)
\right]
}{\alpha T} \notag\\
&\quad
+
\mathcal O\!\left(
\frac1T\sum_{t=0}^{T-1}\frac{1}{\sqrt{|Z_t|}}
+
\frac1T\sum_{t=0}^{T-1}\frac{1}{\mu_t\sqrt{|Z_t|}}
+
\frac1T\sum_{t=0}^{T-1}\frac{1}{|Z_t|\mu_t^{3/2}}
\right)
+
\mathcal O(\alpha).
\label{eq:app_avg_bound}
\end{align}
By the definition of \(\lambda^*(x_t)\),
\begin{align}
    \mathbb E\!\left[
{\min_{0\le t\le T-1}}
R_{S}(x_t)
\right]
=
\mathbb E\!\left[
{\min_{0\le t\le T-1}}
\|\nabla F_S(x_t)\lambda^*(x_t)\|^2
\right] 
\le
\frac1T\sum_{t=0}^{T-1}
\mathbb E\!\left[
\|\nabla F_S(x_t)\lambda^*(x_t)\|^2
\right].
\label{eq:minFS}
\end{align}
By Assumption~\ref{ass:finite_lower_bound} and $\lambda_0\in\Delta^M$, {the scalarized objective \(\lambda_0^\top F_S(\cdot)\) is bounded below.} Hence, there exists a finite constant $\Delta_F$ such that $\mathbb E\!\left[
\lambda_0^\top F_S(x_0)-\lambda_0^\top F_S(x_T)
\right]
\le
\Delta_F<\infty$. Therefore,
$\frac{
\mathbb E\!\left[
\lambda_0^\top F_S(x_0)-\lambda_0^\top F_S(x_T)
\right]
}{\alpha T}
\leq
\mathcal O\!\left(\frac{1}{\alpha T}\right)$. Thus,
combining~\eqref{eq:minFS} with~\eqref{eq:app_avg_bound} proves the desired averaged stationarity bound:
\begin{align}
\mathbb E\!\left[
{\min_{0\le t\le T-1}}
R_{S}(x_t)
\right]
&\le
\mathcal O\!\left(
\frac{1}{\alpha T}
+
\alpha
+
\frac1T\sum_{t=0}^{T-1}\frac{1}{\sqrt{|Z_t|}}
+
\frac1T\sum_{t=0}^{T-1}\frac{1}{\mu_t\sqrt{|Z_t|}}
+
\frac1T\sum_{t=0}^{T-1}\frac{1}{|Z_t|\mu_t^{3/2}}
\right).
\label{eq:thm_final_rate_main_explicit_Z}
\end{align}
If \(\mu_t=\Theta(1)\), then there exists a constant \(\underline{\mu}>0\) such that
\(\mu_t\ge\underline{\mu}\) for all \(t\). Hence,
$
\frac1T\sum_{t=0}^{T-1}\frac{1}{\mu_t\sqrt{|Z_t|}}
\le
\frac{1}{\underline{\mu} T}
\sum_{t=0}^{T-1}\frac{1}{\sqrt{|Z_t|}},
$
and
$
\frac1T\sum_{t=0}^{T-1}\frac{1}{|Z_t|\mu_t^{3/2}}
\le
\frac{1}{\underline{\mu}^{3/2}T}
\sum_{t=0}^{T-1}\frac{1}{|Z_t|}.
$ Since \(|Z_t|=\Theta(t+1)\), we have \(T^{-1}\sum_{t=0}^{T-1}|Z_t|^{-1/2}=\mathcal O(T^{-1/2})\) and \(T^{-1}\sum_{t=0}^{T-1}|Z_t|^{-1}=\mathcal O(\log T/T)=\mathcal O(T^{-1/2})\).
Choosing \(\alpha=\Theta(T^{-1/2})\) and \(|Z_t|=\Theta(t+1)\) gives
$
\mathbb E\!\left[
{\min_{0\le t\le T-1}}
R_{S}(x_t)
\right]
=
\mathcal O(T^{-1/2}).
$
Moreover, when the threshold decays logarithmically as
$\mu_t=\Theta\!\left(\frac{1}{\log(e+t)}\right),
$
under \(|Z_t|=\Theta(t+1)\) and \(\alpha=\Theta(T^{-1/2})\), the rate is $\widetilde{\mathcal O}(T^{-1/2})$.

\end{proof}
\subsection{Proof of Theorem~\ref{thm:ca}}\label{app:ca}
\begin{proof}
Since \(M=2\), write
\(\nabla F_S(x_t)=[\nabla f_{S,1}(x_t),\nabla f_{S,2}(x_t)]\) and
\(Q_t=[q_{Z_t,1}(x_t),q_{Z_t,2}(x_t)]\). For brevity in this proof, write \(q_{Z_t,m}=q_{Z_t,m}(x_t)\) for \(m=1,2\). Let
\(d(x_t)=-\nabla F_S(x_t)\lambda^*(x_t)\), where
\(\lambda^*(x_t)\in\mathop{\arg\min}_{\lambda\in{\Delta^2}}
\|\nabla F_S(x_t)\lambda\|^2\), and let
\(d_t=-Q_t\lambda_t\).
{By Assumption~\ref{ass:gradbound_main} and \eqref{eq:uniform_sample_gradient_consequences},
\(\|Q_t\|_F\le \ell_F\) and \(\|\nabla F_S(x_t)\|_F\le \ell_F\), so the bad-event estimate \eqref{eq:bad_event_probability} applies directly with \(\ell_F\).} 
Define the regular branch event
$
\mathcal R_t
\coloneqq
\left\{
\mu_{\min}(U^\top Q_t^\top Q_tU)\ge \mu_t
\right\}$.
Then
$
\mathcal R_t^c
=
\left\{
\mu_{\min}(U^\top Q_t^\top Q_tU)<\mu_t
\right\}
$. 
By the definition of \(\mathcal E_{\mathrm{ca}}(t)\) and Jensen's inequality,
\begin{align}
\mathcal E_{\mathrm{ca}}(t)
=\left\|\mathbb E[d_t-d(x_t)]\right\|^2=
\left\|
\mathbb E\!\left[
\mathbb E[d_t-d(x_t)\mid\mathcal F_t]
\right]
\right\|^2
\le
\mathbb E\!\left[
\left\|
\mathbb E[d_t-d(x_t)\mid\mathcal F_t]
\right\|^2
\right]
\le
\E\!\left[\|d_t-d(x_t)\|^2\right] .
\label{eq:app_ca_bound_1}
\end{align}
We split the right-hand side according to the two branches of
Algorithm~\ref{alg:alg}:
\begin{align}
\mathbb E\!\left[\|d_t-d(x_t)\|^2\right]
&=
\mathbb E\!\left[
\mathbf 1_{\mathcal R_t}\|d_t-d(x_t)\|^2
\right]
+
\mathbb E\!\left[
\mathbf 1_{\mathcal R_t^c}\|d_t-d(x_t)\|^2
\right].
\label{eq:ca_branch_split}
\end{align}
The two terms are controlled by different mechanisms. On the regular branch
\(\mathcal R_t\), the stochastic MGDA subproblem is sufficiently nondegenerate,
so the direction perturbation can be bounded by the Lipschitz continuity of the
CA direction, up to the bad-event correction already used in the proof of
Theorem~\ref{thm:conv_main}. On the degenerate branch
\(\mathcal R_t^c\), we do not rely on Lipschitz continuity. Instead, when
\(M=2\), small tangent curvature directly implies
\(\|q_{Z_t,1}-q_{Z_t,2}\|^2=O(\mu_t)\), which controls the discrepancy between
the fallback direction and the full-batch CA direction.
We next bound the regular and degenerate branch terms separately. \\
\textbf{Case 1:}
If \(\mu_{\min}(U^\top Q_t^\top Q_tU)\ge \mu_t\), 
using the regular-branch direction perturbation bound
\eqref{eq:app_direction_diff_good_bad} from the proof of
Theorem~\ref{thm:conv_main}, we have
\begin{equation}
\|d_t-d(x_t)\|
\le
\ell_{d,t}\|Q_t-\nabla F_S(x_t)\|_F
+
2\sqrt{\ell_F}\,
\|Q_t-\nabla F_S(x_t)\|_F^{1/2}\mathbf 1_{\mathcal G_t^c},
\label{eq:ca_reuse_direction_bound}
\end{equation}
where
$
\mathcal G_t^c
=
\left\{
\|U^\top Q_t^\top Q_tU-U^\top \nabla F_S(x_t)^\top \nabla F_S(x_t)U\|_2>\frac{\mu_t}{2}
\right\}$.
Squaring~\eqref{eq:ca_reuse_direction_bound} and using
\((a+b)^2\le 2a^2+2b^2\), we obtain
\begin{align}
\|d_t-d(x_t)\|^2
&\le
2\ell_{d,t}^2\|Q_t-\nabla F_S(x_t)\|_F^2
+
8\ell_F\|Q_t-\nabla F_S(x_t)\|_F\mathbf 1_{\mathcal G_t^c}.
\label{eq:ca_regular_square_bound}
\end{align}
By conditioning on \(\mathcal F_t\), using Assumption~\ref{ass:unbiased_main}, and then taking total expectation, since \(M=2\),
$
\mathbb E\!\left[\|Q_t-\nabla F_S(x_t)\|_F^2\right]
\le
\frac{2\sigma^2}{|Z_t|}
$. Moreover, by the Cauchy--Schwarz inequality and the bad-event probability bound
\eqref{eq:bad_event_probability} from the proof of Theorem~\ref{thm:conv_main},
\begin{align}
\mathbb E\!\left[
\|Q_t-\nabla F_S(x_t)\|_F\mathbf 1_{\mathcal G_t^c}
\right]
\le
\left(
\mathbb E\!\left[\|Q_t-\nabla F_S(x_t)\|_F^2\right]
\right)^{1/2}
\left(
\mathbb P(\mathcal G_t^c)
\right)^{1/2} 
\le
\left(\frac{2\sigma^2}{|Z_t|}\right)^{1/2}
\left(\frac{32\ell_F^2\sigma^2}{|Z_t|\mu_t^2}\right)^{1/2} 
=
\mathcal O\!\left(\frac{1}{|Z_t|\mu_t}\right).
\label{eq:ca_regular_bad_event_term}
\end{align}
Using \(\ell_{d,t}=1+4\ell_F^2/\mu_t\),
$
\mathbb E\!\left[\|Q_t-\nabla F_S(x_t)\|_F^2\right]
\le
\frac{2\sigma^2}{|Z_t|}
$, and
\eqref{eq:ca_regular_bad_event_term} in
\eqref{eq:ca_regular_square_bound}, we obtain
\begin{equation}
\mathbb E\!\left[
\mathbf 1_{\mathcal R_t}\|d_t-d(x_t)\|^2
\right]
=
\mathcal O\!\left(
\frac{1}{|Z_t|\mu_t^2}
\right),
\label{eq:app_ca_rate_1}
\end{equation}
where the term
\(\mathcal O(1/(|Z_t|\mu_t))\) is absorbed into
\(\mathcal O(1/(|Z_t|\mu_t^2))\) since \(0<\mu_t\le\bar\mu\).
\\\textbf{Case 2:}
If \(\mu_{\min}(U^\top Q_t^\top Q_tU)< \mu_t\),
every \(\lambda\in{\Delta^2}\) can be written as \(\lambda=(a,1-a)\) for some
\(a\in[0,1]\). Write
\begin{equation}
\lambda_t=(\hat a_t,1-\hat a_t),
\qquad
\lambda^*(x_t)=(a_t^*,1-a_t^*),
\end{equation}
with \(a_t^*,\hat a_t\in[0,1]\). Then
\begin{equation}
d_t
=-(\hat a_t q_{Z_t,1}+(1-\hat a_t)q_{Z_t,2}),
\qquad
d(x_t)
=-(a_t^* \nabla f_{S,1}(x_t)+(1-a_t^*)\nabla f_{S,2}(x_t)).
\end{equation}
Adding and subtracting \(a_t^*q_{Z_t,1}+(1-a_t^*)q_{Z_t,2}\), we obtain
\begin{align}
d_t-d(x_t)
&=
(a_t^*-\hat a_t)(q_{Z_t,1}-q_{Z_t,2})
+a_t^*(\nabla f_{S,1}(x_t)-q_{Z_t,1})
+(1-a_t^*)(\nabla f_{S,2}(x_t)-q_{Z_t,2}).
\end{align}
Thus, using \(a_t^*,\hat a_t\in[0,1]\),
\begin{align}
\|d_t-d(x_t)\|
&\le
|a_t^*-\hat a_t|\,\|q_{Z_t,1}-q_{Z_t,2}\|
+a_t^*\|\nabla f_{S,1}(x_t)-q_{Z_t,1}\| 
+(1-a_t^*)\|\nabla f_{S,2}(x_t)-q_{Z_t,2}\| \notag\\
&\le
\|q_{Z_t,1}-q_{Z_t,2}\|
+\|\nabla f_{S,1}(x_t)-q_{Z_t,1}\|
+\|\nabla f_{S,2}(x_t)-q_{Z_t,2}\|.
\end{align}
Squaring both sides and using
\(\|u+v+w\|^2\le 3\|u\|^2+3\|v\|^2+3\|w\|^2\), we get
\begin{equation}
\|d_t-d(x_t)\|^2
\le
3\|q_{Z_t,1}-\nabla f_{S,1}(x_t)\|^2
+3\|q_{Z_t,2}-\nabla f_{S,2}(x_t)\|^2
+3\|q_{Z_t,1}-q_{Z_t,2}\|^2.
\label{eq:app_twoobj_sq}
\end{equation}
For \(M=2\), the tangent space of the simplex is
\(\operatorname{span}\{(1,-1)^\top\}\). The reduced curvature matrix \(U^\top Q_t^\top Q_tU\) is a scalar, 
and
\begin{equation}
\mu_{\min}(U^\top Q_t^\top Q_tU)=
\frac12 (q_{Z_t,1}-q_{Z_t,2})^\top(q_{Z_t,1}-q_{Z_t,2})=
\frac12\|q_{Z_t,1}-q_{Z_t,2}\|^2.
\label{eq:ca_reduced_curvature_m2}
\end{equation}
Equivalently, for \(v=(1,-1)^\top\), we have
$
Q_tv=q_{Z_t,1}-q_{Z_t,2},
$
and \(U=v/\sqrt 2\). 
Therefore, under the case condition
$\mu_{\min}(U^\top Q_t^\top Q_tU)< \mu_t$,
we have
$
\|q_{Z_t,1}-q_{Z_t,2}\|^2<2\mu_t.  
$
Multiplying \eqref{eq:app_twoobj_sq} by \(\mathbf 1_{\mathcal R_t^c}\), bounding the last term via \(\mathbf 1_{\mathcal R_t^c}\|q_{Z_t,1}-q_{Z_t,2}\|^2<2\mu_t\), dropping the indicator on the two variance terms, and taking expectations yields
\begin{align}
\mathbb E\!\left[
\mathbf 1_{\mathcal R_t^c}\|d_t-d(x_t)\|^2
\right]
\le
3\E\!\left[\|q_{Z_t,1}-\nabla f_{S,1}(x_t)\|^2\right] 
+3\E\!\left[\|q_{Z_t,2}-\nabla f_{S,2}(x_t)\|^2\right]
+6\mu_t.
\label{eq:app_case2_prevariance}
\end{align}
By Assumption~\ref{ass:unbiased_main}, for \(m=1,2\),
$
\E\!\left[\|q_{Z_t,m}-\nabla f_{S,m}(x_t)\|^2\right]
\le
\frac{\sigma^2}{|Z_t|}.
$
Combining this fact and \eqref{eq:app_case2_prevariance}, we obtain
\begin{equation}
\mathbb E\!\left[
\mathbf 1_{\mathcal R_t^c}\|d_t-d(x_t)\|^2
\right]
\le
\frac{6\sigma^2}{|Z_t|}+6\mu_t.
\label{eq:app_ca_rate_2}
\end{equation}
\textbf{Unified bound for both cases.} Combining these two cases, from \eqref{eq:app_ca_rate_1} and \eqref{eq:app_ca_rate_2}, we conclude that
\begin{equation}
\mathcal E_{\mathrm{ca}}(t)
=
\mathcal O\!\left(
\frac{1}{|Z_t|\,\mu_t^2}+
\frac{1}{|Z_t|}+\mu_t
\right).
\end{equation}
For the schedule used in
Theorem~\ref{thm:conv_main}, i.e., \(|Z_t|=\Theta(t+1)\) and \(\mu_t=\Theta\!\left(\frac{1}{\log(e+t)}\right)\),
we obtain
\begin{equation}
\mathcal E_{\mathrm{ca}}(t)
=
\mathcal O\!\left(
\frac{1}{\log(e+t)}
\right).
\end{equation}
This proves the theorem.
\end{proof}

\vfill

\end{document}